\documentclass{article}

\usepackage{microtype}
\usepackage{graphicx}
\usepackage{subcaption}
\usepackage{booktabs,tabularx,multirow} 

\usepackage{hyperref,dsfont}

\usepackage[preprint]{icml2026}

\usepackage{amsmath}
\usepackage{amssymb}
\usepackage{mathtools}
\usepackage{amsthm}
\usepackage{bm,bbm}
\usepackage{enumitem}

\usepackage[capitalize,noabbrev]{cleveref}

\theoremstyle{plain}
\newtheorem{theorem}{Theorem}[section]
\newtheorem{proposition}[theorem]{Proposition}
\newtheorem{lemma}[theorem]{Lemma}

\theoremstyle{definition}

\newtheorem{assumption}[theorem]{Assumption}
\theoremstyle{remark}

\renewcommand{\tilde}{\widetilde} 
\renewcommand{\hat}{\widehat}

\DeclareMathOperator*{\argmin}{arg\,min}
\newcommand{\norm}[1]{\left\lVert#1\right\rVert}

\newtheorem{innercustomthm}{Theorem}
\newenvironment{customthm}[1]
{\renewcommand\theinnercustomthm{#1}\innercustomthm}
{\endinnercustomthm}

\usepackage{array}
\newcommand{\PreserveBackslash}[1]{\let\temp=\\#1\let\\=\temp}
\newcolumntype{C}[1]{>{\PreserveBackslash\centering}p{#1}}
\newcolumntype{R}[1]{>{\PreserveBackslash\raggedleft}p{#1}}
\newcolumntype{L}[1]{>{\PreserveBackslash\raggedright}p{#1}}

\newcolumntype{C}{>{\centering\arraybackslash}X}

\allowdisplaybreaks

\usepackage[textsize=tiny]{todonotes}

\icmltitlerunning{Transfer Learning Through Conditional Quantile Matching}

\begin{document}
	
\twocolumn[
\icmltitle{Transfer Learning Through Conditional Quantile Matching}


\icmlsetsymbol{equal}{*}

\begin{icmlauthorlist}
	\icmlauthor{Yikun Zhang}{uw}
	\icmlauthor{Steven Wilkins-Reeves}{comp}
	\icmlauthor{Wesley Lee}{comp}
	\icmlauthor{Aude Hofleitner}{comp}
\end{icmlauthorlist}

\icmlaffiliation{uw}{Department of Statistics, University of Washington, Seattle, USA}
\icmlaffiliation{comp}{Central Applied Science, Meta, Menlo Park, USA}

\icmlcorrespondingauthor{Yikun Zhang}{yikun@uw.edu}

\icmlkeywords{Transfer Learning, Covariate and Concept Shifts, Quantile Matching, Synthetic Data Generation}

\vskip 0.3in
]



\printAffiliationsAndNotice{Work done when the first author interned at Meta.}  

\begin{abstract}
	We introduce a transfer learning framework for regression that leverages heterogeneous source domains to improve predictive performance in a data-scarce target domain. Our approach learns a conditional generative model separately for each source domain and calibrates the generated responses to the target domain via conditional quantile matching. This distributional alignment step corrects general discrepancies between source and target domains without imposing restrictive assumptions such as covariate or label shift. The resulting framework provides a principled and flexible approach to high-quality data augmentation for downstream learning tasks in the target domain.
	From a theoretical perspective, we show that an empirical risk minimizer (ERM) trained on the augmented dataset achieves a tighter excess risk bound than the target-only ERM under mild conditions. In particular, we establish new convergence rates for the quantile matching estimator that governs the transfer bias-variance tradeoff.
	From a practical perspective, extensive simulations and real data applications demonstrate that the proposed method consistently improves prediction accuracy over target-only learning and competing transfer learning methods.
\end{abstract}


\section{Introduction}
\label{sec:intro}

The rapid growth in the volume and heterogeneity of data has created unprecedented opportunities for machine learning methods to improve predictive performances in domains where labeled data are scarce. A prominent paradigm that exploits such opportunities is transfer learning, which aims to improve performances in a target domain by leveraging information from one or more related but heterogeneous source domains \citep{torrey2010transfer,weiss2016survey}. Commonly, we observe data from $K$ source domains, $\left\{\left(X_i^{(k)}, Y_i^{(k)} \right)\right\}_{i=1}^{n_k} \sim P^{(k)}$ for $k=1,...,K$, together with limited data from a target domain $\left\{\left(X_i^{(0)}, Y_i^{(0)} \right)\right\}_{i=1}^{n_0} \sim P^{(0)}$, where $P^{(j)},j=0,1,...,K$ are probability distributions supported on the product space $\mathcal{X}\times \mathcal{Y}$. The goal is to use the source data to enhance learning in the target domain, especially when $n_0$ is small.

Despite its promise, the success of transfer learning critically depends on how well the source and target distributions align. Distributional discrepancies can lead to negative transfer, in which incorporating source data degrades performances in the target domain \citep{wang2019characterizing,zhang2022survey}. To mitigate this risk, much of the existing literature relies on structural assumptions that constrain the relationship between source and target distributions. Two widely studied assumptions are: 
\vspace{-3mm}
\begin{enumerate}[label=(\roman*)]
	\setlength\itemsep{0.1em}
	\item \emph{Covariate Shift} \citep{shimodaira2000improving}: $P^{(k)}(y|x) = P^{(0)}(y|x)$ but $P^{(k)}(x)\neq P^{(0)}(x)$;
	\item \emph{Label Shift} \citep{saerens2002adjusting,nguyen2016continuous}: $P^{(k)}(x|y) = P^{(0)}(x|y)$ but $P^{(k)}(y)\neq P^{(0)}(y)$.
\end{enumerate}
\vspace{-3mm}%
A number of recent extensions have sought to relax these assumptions by introducing invariant or transformed features \citep{pan2010domain,gong2016domain}, latent variables \citep{tsai2024proxy}, or localized shift models \citep{wilkins2024multiply}. Nevertheless, empirical evidence suggests that both covariate and label shift assumptions are often violated in practice \citep{zhang2015multi,schrouff2022diagnosing}, or may hold only for a subset of source domains. Moreover, existing methods that attempt to identify transferable sources typically require additional tuning parameters \citep{bai2024transfer} or sample splitting of the already limited target data \citep{tian2023transfer,wang2023transfer}, which can further limit their effectiveness.

\begin{figure}
	\centering
	\includegraphics[width=1\linewidth]{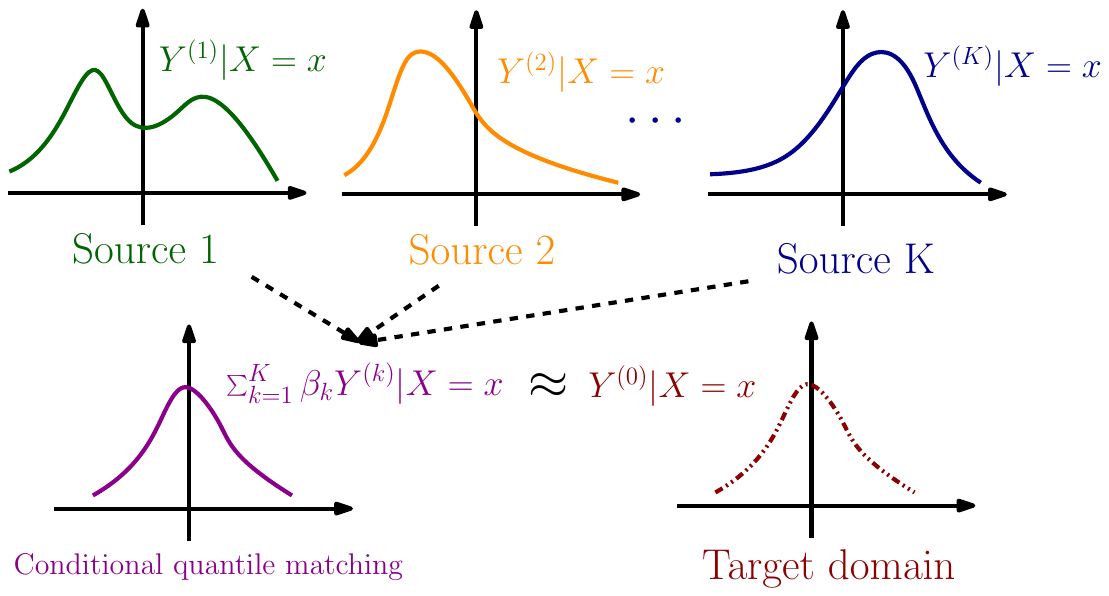}
	\caption{Illustration of the conditional quantile matching process.}
	\label{fig:condQM_illust}
\end{figure}

In this paper, we propose a novel transfer learning framework that addresses general distributional shifts between source and target domains from a fundamentally different perspective. Rather than imposing explicit assumptions on the relationship between $P^{(k)},k=1,...,K$ and $P^{(0)}$, we learn conditional generative models from heterogeneous source domains and then calibrate the generated responses to the target domain via  quantile matching. This approach enables high-quality synthetic data augmentation for the target domain even in challenging settings where both the marginal covariate distributions and the conditional response distributions differ across domains:
\begin{equation}
\label{target_shift}
P^{(i)}(x)\neq P^{(j)}(x) \quad \text{ and } \quad P^{(i)}(y|x)\neq P^{(j)}(y|x)
\end{equation}
for $i\neq j$ and $i,j=0,...,K$. \autoref{fig:condQM_illust} illustrates the key idea: synthetic responses generated from multiple source domains are linearly combined and calibrated so that their conditional distribution aligns with that of the target domain. By matching conditional quantiles rather than moments or likelihoods, the proposed framework aligns entire response distributions while remaining computationally tractable. Moreover, the linear structure of the matching step implicitly regularizes the contribution of each source domain, automatically attenuating sources whose conditional responses are poorly aligned with the target.

\subsection{Contributions and Outline of the Paper}

The contributions of this paper are summarized as follows.
\vspace{-3mm}
\begin{enumerate}
	\setlength\itemsep{0.1em}
	\item {\bf Methodology:} We introduce a general transfer learning framework for regression based on conditional quantile matching (TLCQM) in \autoref{sec:method}. The proposed method provides a principled approach to high-quality data augmentation and can be used as a preprocessing step for downstream machine learning models.
	
	\item {\bf Theory:} We establish excess risk bounds on empirical risk minimizers (ERMs) trained on the augmented data in \autoref{sec:theory}. Our analysis reveals how transfer bias, generative model error, and quantile matching error jointly govern performance gains. As a key technical contribution, we derive, for the first time, the rate of convergence of the quantile matching estimator introduced in \citet{sgouropoulos2015matching}.
	
	\item {\bf Experiments:} Through comprehensive experiments on both simulated and real datasets, we demonstrate that the proposed framework consistently improves prediction accuracy over target-only learning and other competing transfer learning methods in \autoref{sec:experiments}. 
	The code for our experiments is available at \url{https://github.com/facebookresearch/TLCQM}.
\end{enumerate}

\subsection{Other Related Works}

{\bf Transfer learning:} Within the broad literature on transfer learning \citep{pan2009survey}, our work falls under supervised or transductive transfer learning for regression, also known as multi-domain adaptation \citep{mansour2008domain}. We therefore do not review related but distinct lines of work on transfer learning for classification \citep{cai2021transfer} or unsupervised domain adaptation \citep{ganin2016domain,long2015learning,chen2021representation}. 
Recent theoretical advances in supervised transfer learning for regression have focused on high-dimensional (generalized linear) models \citep{li2022transfer,tian2023transfer,zhou2024model,zhou2025doubly}, kernel ridge regression \citep{wang2023transfer,wang2023pseudo,lin2024smoothness,lin2025model}, and more general nonparametric settings \citep{cai2024transfer}. The generalized target shift scenario \eqref{target_shift} has also been studied through different angles, including location-scale conditional shift models \citep{zhang2013domain}, conditional embedding operators from the maximum mean discrepancy in deep learning \citep{liu2021deep}, outcome coarsening combined with representation learning \citep{wu2023multi}, pairwise similarity-preserving feature extraction to correct distributional discrepancies in $X | Y$ \citep{taghiyarrenani2023multi}, and semiparametric regression models with representation learning \citep{he2024representation}.
To the best of our knowledge, no existing work addresses transfer learning under general target shift \eqref{target_shift} using quantile matching.

{\bf Quantile matching:} Quantile matching has appeared in various context of the literature, including two-sample testing \citep{kosorok1999two}, parameter estimation \citep{dominicy2013method}, model diagnostics via quantile-quantile plot \citep{stuart2010matching}, and Bayesian method \citep{nirwan2020bayesian}. More broadly, quantile matching can be interpreted as a special case of solving an optimal transport problem between univariate marginal distributions \citep{mallows1972note,villani2008optimal}. 

Finally, we emphasize that our work is distinct from transfer learning for quantile regression \citep{zhang2022transfer,qiao2024transfer,bai2024transfer,jin2024transfer}, which aims to improve estimation and inference for conditional quantiles in the target domain by leveraging source data. In contrast, our framework focuses on data augmentation through distributional alignment and can be used as a general preprocessing step to enhance a wide range of downstream learning tasks, including but not limited to quantile regression.


\subsection{Notations}

Throughout this paper, we consider a continuous response variable with $\mathcal{Y}\subset \mathbb{R}$ and impose no restrictions on the covariate space $\mathcal{X}$, though our proposed framework can be readily extended to settings involving multivariate and/or discrete response variables; see \autoref{sec:discuss} for details. We adopt standard asymptotic notations: for deterministic sequences ${h_n}$ and ${g_n}$ with $g_n > 0$, we write $h_n = O(g_n)$ if $\frac{|h_n|}{g_n} \leq C$ for some absolute constant $C > 0$ and all sufficiently large $n$, and $h_n = o(g_n)$ if $\frac{|h_n|}{g_n} \to 0$ as $n \to \infty$. For a random sequence $X_n$ and a deterministic sequence ${h_n}$, $X_n=o_P(h_n)$ means that $\frac{X_n}{h_n}$ converges to 0 in probability, while $X_n=O_P(h_n)$ indicates that $\frac{X_n}{h_n}$ is bounded in probability as $n\to \infty$.
We further write $a_n \lesssim b_n$ (or equivalently $b_n \gtrsim a_n$) if $a_n \leq C b_n$ for some constant $C>0$ and all sufficiently large $n$. When both $a_n \lesssim b_n$ and $a_n \gtrsim b_n$ hold, we denote the asymptotic equivalence by $a_n \asymp b_n$.

\section{Proposed Framework}
\label{sec:method}

In this section, we introduce a general framework for high-quality data augmentation or synthetic data generation in transfer learning for the target domain. A high-level overview of the proposed procedure is summarized in Algorithm~\ref{algo:CQM}. 
The framework consists of learning conditional generative models from source domains, generating synthetic responses at target covariates, and calibrating these responses to the target domain via quantile matching.
Among these steps, Steps 1 and 3 are central to the methodology and are detailed in \autoref{subsec:engress} and \autoref{subsec:quant_match}, respectively.

\begin{algorithm}[t]
	\caption{Data Augmentation with Conditional Quantile Matching (TLCQM)}
	\label{algo:CQM}
	\begin{algorithmic}
		\STATE {\bf Input:} Target data $\mathcal{D}_T=\left\{\left(X_i^{(0)}, Y_i^{(0)} \right)\right\}_{i=1}^{n_0}$ and source data $\mathcal{D}_S^{(k)}=\left\{\left(X_i^{(k)}, Y_i^{(k)} \right)\right\}_{i=1}^{n_k},k=1,...,K$.
		\STATE {\bf Step 1:} For each source domain $k$, estimate a conditional generative model $\hat{P}^{(k)}(y|x)$ using the source data $\mathcal{D}_S^{(k)}$.
		
		\STATE {\bf Step 2:} For each target covariate vector $X_i^{(0)}$, generate $M$ synthetic responses independently from each learned generative model $\hat{P}^{(k)}(y|x), k=1,...,K$ to obtain $\left\{\left(X_i^{(0)}, \hat{\bm{Y}}_{ij} \right): i=1,...,n_0, j=1,...,M \right\}$, where  $$\hat{\bm{Y}}_{ij}=\left(\hat{Y}_{ij}^{(1)},..., \hat{Y}_{ij}^{(K)}\right) \, \text{ with } \, \hat{Y}_{ij}^{(k)}\sim \hat{P}^{(k)}(y|X_i^{(0)}).$$
		
		\STATE {\bf Step 3:} Compute the quantile matching estimator $\hat{\bm{\beta}}$ by solving \eqref{quantile_ls}.
		
		\STATE {\bf Step 4:} Augment the target domain with the synthetic data $$\mathcal{D}_A=\bigcup_{k=1}^K\left\{\left(X_i^{(k)}, \hat{\bm{\beta}}^T \hat{\bm{V}}_i^{(k)} \right) \right\}_{i=1}^{n_k},$$
		where $\hat{\bm{V}}_i^{(k)} = \left(1,\hat{Y}_i^{(1,k)},...,\hat{Y}_i^{(K,k)} \right)^T\in \mathbb{R}^{K+1}$ and $\hat{Y}_i^{(j,k)}$ denotes any predicted value for $Y^{(j)}$ from the $j$-th source generative model evaluated at the covariate $X_i^{(k)}$.
		\STATE {\bf Step 5 (Optional):} Estimate the density ratios $w_k(x)=\frac{dP^{(0)}(x)}{dP^{(k)}(x)}$ for $k=1,...,K$ to correct for covariate shifts.
		\STATE {\bf Output:} The final augmented data $\mathcal{D}_F = \mathcal{D}_T\cup \mathcal{D}_A$.
	\end{algorithmic}
\end{algorithm}

\subsection{Learning Conditional Generative Models via Engression}
\label{subsec:engress}

Depending on data availability and problem structure, a variety of conditional generative models can be employed in Step 1 of Algorithm~\ref{algo:CQM}. Here, we consider a neural network-based distributional regression method known as ``engression'' \citep{shen2025engression} as a default choice for its simplicity, flexibility, and empirical effectiveness. Specifically for each source domain $k$, we model the conditional distribution $P^{(k)}(y|x)$ through a measurable function $g^{(k)}: \mathcal{X} \times \mathcal{E} \to \mathcal{Y}$, which maps covariates $x$ and pre-specified noise vectors $\eta \sim P_{\eta}$ (\emph{e.g.}, Gaussian or uniform) to responses. The induced pushforward measure satisfies $g^{(k)}(x,\cdot)_\# P_{\eta} = P^{(k)}(\cdot|x)$.
Under the engression framework, the population-level estimator is obtained by solving
\begin{align*}
	g^{(k)} \in \argmin_{g\in \mathcal{F}} &\, \mathbb{E}_{(X,Y,\eta)\sim P^{(k)}\times P_{\eta}}\Big[|Y-g(X,\eta)| \\
	& - \frac{1}{2}\left|g(X,\eta) - g(X,\eta')\right|\Big]
\end{align*}
over a certain function class $\mathcal{F}$.
In practice, we draw $m$ independent and identically distributed (i.i.d.) samples $\eta_{ij},j=1,...,m$ for each observation $\left(X_i^{(k)},Y_i^{(k)}\right)$ and solve the empirical optimization problem
\begin{align*}
	&\hat{g}^{(k)} \in \argmin_{g\in \mathcal{F}} \frac{1}{n_k} \sum_{i=1}^{n_k} \bigg[\frac{1}{m} \sum_{j=1}^m |Y_i^{(k)} - g(X_i^{(k)}, \eta_{ij})| \\
	&\quad - \frac{1}{2m(m-1)} \sum_{j=1}^m \sum_{j'=1}^m |g(X_i^{(k)},\eta_{ij}) - g(X_i^{(k)},\eta_{i,j'})| \bigg].
\end{align*}
Following Step 2 of Algorithm~\ref{algo:CQM}, synthetic responses are then generated at target covariates via 
$\hat{Y}_{ij}^{(k)} = \hat{g}^{(k)}(X_i^{(0)},\eta_{ij}^{(k)})$, where  $\left\{\eta_{ij}^{(k)}\right\}_{j=1}^m$ are i.i.d. noise vectors from the pre-specified distribution and $m>0$ is chosen sufficiently large (\emph{e.g.} $m\geq 2000$).

\subsection{Conditional Quantile Matching}
\label{subsec:quant_match}

After generating synthetic responses from the learned source-domain generative models $\hat{P}^{(k)}(y|x), k=1,...,K$, the next step is to calibrate these responses to the target domain using the observed data $\mathcal{D}_T=\left\{\left(X_i^{(0)}, Y_i^{(0)} \right)\right\}_{i=1}^{n_0}$. To this end, we adopt the quantile matching method \citep{sgouropoulos2015matching} for two major reasons. First, it is computationally more efficient than general optimal transport-based distributional alignment methods \citep{chernozhukov2017monge}. Second, the linear structure in \eqref{quantile_ls_true} implicitly regularizes the contribution of each source domain, enabling automatic identification of transferable sources. At the population level, the quantile matching method seeks a coefficient vector $\bm{\beta}_*\in \mathbb{R}^{K+1}$ satisfying
\begin{equation}
	\label{quantile_ls_true}
	\bm{\beta}_* \in \argmin_{\bm{\beta}\in \mathbb{R}^{K+1}} S(\bm{\beta}),
\end{equation}
where $S(\bm{\beta})=\int_0^1 \left[Q_{Y^{(0)}}(\alpha) - Q_{\bm{\beta}^T\bm{V}}(\alpha) \right]^2 d\alpha$ is a squared Mallows' metric \citep{mallows1972note} and $Q_{\xi}(\alpha)$ denotes the $\alpha$-quantile of a random variable $\xi$. In Step 3 of Algorithm~\ref{algo:CQM}, we estimate $\bm{\beta}_*$ by solving
\begin{equation}
	\label{quantile_ls}
	\hat{\bm{\beta}} \in \argmin_{\bm{\beta}\in \mathbb{R}^{K+1}} \sum_{i=1}^{n_0} \sum_{j=1}^M\left[Y_{(i)}^{(0)} - \left(\bm{\beta}^T \hat{\bm{V}}\right)_{(ij)} \right]^2,
\end{equation}
where $\hat{\bm{V}}_{ij}=\left(1, \hat{\bm{Y}}_{ij}\right)^T=\left(1,\hat{Y}_{ij}^{(1)},..., \hat{Y}_{ij}^{(K)}\right)^T\in \mathbb{R}^{K+1}$. Here, $Y_{(1)}^{(0)}\leq \cdots \leq Y_{(n_0)}^{(0)}$ are the order statistics of $Y_1^{(0)},...,Y_{n_0}^{(0)}$ and  $\left(\bm{\beta}^T \hat{\bm{V}}\right)_{(1)} \leq \cdots \leq \left(\bm{\beta}^T \hat{\bm{V}}\right)_{(n_0M)}$ are the order statistics of $\bm{\beta}^T \hat{\bm{V}}_{11},...,\bm{\beta}^T \hat{\bm{V}}_{n_0M}$. The number of generated synthetic responses $M$ should be chosen sufficiently large (\emph{e.g.}, $M=3000$) to ensure convergence of $\hat{\bm{\beta}}$  to the global optimum. Although \eqref{quantile_ls} admits no closed-form solution, it can be efficiently solved via an iterative algorithm with established algorithmic convergence guarantees in \citet{sgouropoulos2015matching}.


\section{Theoretical Analysis}
\label{sec:theory}

This section provides a theoretical comparison of prediction risks between ERMs trained solely on the target domain and those trained on the augmented data $\mathcal{D}_F$ produced by our TLCQM framework (Algorithm~\ref{algo:CQM}). Notably, our TLCQM framework is agnostic to the specific learning or optimization procedure used to fit the prediction function and applies to a broad class of machine learning methods.

Let $\ell:\mathcal{Y}\times \mathcal{Y}\to \mathbb{R}$ be a loss function. The population risk of a prediction function $f\in \mathcal{F}$ on the target domain is $R(f):=\mathbb{E}_{P^{(0)}}\left[\ell\left(Y^{(0)}, f(X^{(0)})\right) \right] = \mathbb{E}\left[\ell\left(Y^{(0)}, f(X^{(0)})\right) \right]$, and we define $f^{(0)} = \argmin_{f\in \mathcal{F}} R(f)$ as the population risk minimizer over the function class $\mathcal{F}$. Then, the excess risk of any candidate function $f:\mathcal{X}\to \mathcal{Y}$ is defined as $R(f) - R(f^{(0)}) = \mathbb{E}\left[\ell\left(Y^{(0)}, f(X^{(0)})\right) \right] - \mathbb{E}\left[\ell\left(Y^{(0)}, f^{(0)}(X^{(0)})\right) \right]$.

We analyze empirical risk minimization (ERM) procedures corresponding to different training strategies:
\vspace{-2mm}
\begin{itemize}
	\setlength\itemsep{-0.5em}
	\item {\bf (Target-only ERM)} We define $\hat{f}^{(0)}:\mathcal{X}\to \mathcal{Y}$ as:
	\begin{align}
		\label{target_only_pred}
		\begin{split}
			\hat{f}^{(0)} &= \argmin_{f\in \mathcal{F}} \frac{1}{n_0} \sum_{i=1}^{n_0} \ell\left(Y_i^{(0)}, f(X_i^{(0)}) \right).
		\end{split}
	\end{align}
	
	\item {\bf (TLCQM ERM)} Using the augmented data $\mathcal{D}_F$ from Algorithm~\ref{algo:CQM}, the transfer learning prediction function is defined as:
	\begin{align}
		\label{tranfer_learn_pred}
		\begin{split}
			&\hat{f}^{(0,tl)} = \argmin_{f\in \mathcal{F}} \frac{1}{N} \left\{\sum_{i=1}^{n_0} \ell\left(Y^{(0)}, f(X_i^{(0)})\right) \right.\\
			&\quad \left.+ \sum_{k=1}^K \sum_{i=1}^{n_k} \hat{w}_k(X_i^{(k)}) \cdot \ell\left(\hat{\bm{\beta}}^T \hat{\bm{V}}_i^{(k)}, f(X_i^{(k)})\right)\right\},
		\end{split}
	\end{align}
	where $N=\sum_{k=0}^K n_k$ is the combined sample size of both source and target domains and $\hat{w}_k(x)$ is the estimated density ratio $w_k(x)=\frac{dP^{(0)}(x)}{dP^{(k)}(x)}$ for $k=1,...,K$.
\end{itemize}

The density ratio correction in \eqref{tranfer_learn_pred} is not necessary when the function class $\mathcal{F}$ for the prediction task is correctly specified \citep{sugiyama2007covariate}. In practice, however, incorporating importance weights improves robustness to model misspecification and often stabilizes numerical optimization.

\subsection{Assumptions}

\begin{assumption}[Basic regularity conditions]
	\label{assump:basic_reg}\noindent\vspace{-2mm}
	\begin{enumerate}[label=(\alph*)]
		\item There exists a constant $B_{\ell}>0$ such that $|\ell(y,y')|\leq B_{\ell}$ and $|\ell(y,y') - \ell(y,y'')|\leq B_{\ell}|y' -y''|$.
		
		\item The target distribution $P^{(0)}$ is absolutely continuous with respect to each $P^{(k)}$ and $w_k(x)=\frac{dP^{(0)}(x)}{dP^{(k)}(x)} \leq B_w$ for all $k=1,...,K$ and a constant $B_w>0$.
	\end{enumerate}
\end{assumption}

\begin{assumption}[Boundedness conditions for excess risk]
	\label{assump:quantile_match}\noindent\vspace{-2mm}
	\begin{enumerate}[label=(\alph*)]
		\item The conditional distribution $P^{(k)}(Y^{(k)}|X=x)$ is modeled through $Y^{(k)}=g^{(k)}(x,\eta)$ for $k=0,1,...,K$, where $\eta$ is an independent random vector with a pre-specified distribution. Furthermore, $\sup_{(x,\eta)} \left|g^{(k)}(x,\eta)\right|\leq B_g$ and $\sup_{(x,\eta)} \left|\hat{g}^{(k)}(x,\eta)\right|\leq B_g$ for a constant $B_g >0$ and $k=1,...,K$.
		
		\item Let $\mathcal{B}=\argmin\limits_{\bm{\beta}\in \mathbb{R}^{K+1}} S(\bm{\beta})$ as in \eqref{quantile_ls_true}, where $\bm{V}=\left(1, Y^{(1,0)},...,Y^{(K,0)} \right)^T\in \mathbb{R}^{K+1}$ with $Y^{(k,0)}=g^{(k)}(X^{(0)},\eta^{(k)})$ for $k=1,...,K$. For any $\bm{\beta}_*\in \mathcal{B}$, $\norm{\bm{\beta}_*}_2\leq B_{\beta}$ for a constant $B_{\beta}>0$.
	\end{enumerate}
\end{assumption}

Assumption~\ref{assump:basic_reg}(a) is a standard boundedness and Lipschitz continuity condition on the loss function, which holds, for example, when $\mathcal{Y}$ is compact or when the loss function $\ell$ is truncated at a certain value $B_{\ell}>0$. Assumption~\ref{assump:basic_reg}(b) ensures that the density ratios (or importance weights) between the target and source domains are well-controlled and can be consistently estimated, thereby avoiding instability due to extreme covariate shifts \citep{gretton2009covariate,reddi2015doubly}. Assumption~\ref{assump:quantile_match}(a) imposes mild boundedness conditions on the conditional generative models and their estimators, which resemble the compactness assumption on $\mathcal{Y}$. Finally, Assumption~\ref{assump:quantile_match}(b) restricts the solution set of \eqref{quantile_ls_true} to be bounded, which can be enforced by adding a regularization term to the objective in \eqref{quantile_ls_true}.

\subsection{Target-Only Excess Risk}

Let $\hat{\mathrm{Rad}}_n(\mathcal{F}) = \frac{1}{n} \mathbb{E}_{\sigma}\left[\sup_{f\in \mathcal{F}} \left|\sum_{i=1}^n \sigma_i\cdot f(X_i) \right|\right]$ be the empirical Rademacher complexity of the function class $\mathcal{F}$ with respect to the sample $\{X_i\}_{i=1}^n$, where $\sigma_i$'s are i.i.d. Rademacher random variable, \emph{i.e.}, $\mathbb{P}(\sigma_i=1)=\mathbb{P}(\sigma_i=-1)=\frac{1}{2}$. Denote $\mathrm{Rad}_n(\mathcal{F}) = \mathbb{E}\left[\hat{\mathrm{Rad}}_n(\mathcal{F})\right]$.

\begin{proposition}
	\label{prop:target_only_excess}
	Under Assumption~\ref{assump:basic_reg}(a), the target-only prediction function $\hat{f}^{(0)}$ in \eqref{target_only_pred} has its excess risk satisfying that, with probability at least $1-\delta$,
	$$R(\hat{f}^{(0)}) - R(f^{(0)}) \lesssim \mathrm{Rad}_{n_0}(\mathcal{F}) + \sqrt{\frac{\log(1/\delta)}{n_0}}.$$
\end{proposition}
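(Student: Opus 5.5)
The plan is the textbook ERM argument: split the excess risk into an estimation term controlled by a uniform deviation, then bound that deviation by symmetrization and contraction. Write $\hat R_{n_0}(f)=\frac{1}{n_0}\sum_{i=1}^{n_0}\ell\bigl(Y_i^{(0)},f(X_i^{(0)})\bigr)$. Since $\hat f^{(0)}$ minimizes $\hat R_{n_0}$ over $\mathcal{F}$, we have $\hat R_{n_0}(\hat f^{(0)})-\hat R_{n_0}(f^{(0)})\le 0$. Adding and subtracting the empirical risks gives
\begin{align*}
R(\hat f^{(0)})-R(f^{(0)}) &\le \bigl[R(\hat f^{(0)})-\hat R_{n_0}(\hat f^{(0)})\bigr]+\bigl[\hat R_{n_0}(f^{(0)})-R(f^{(0)})\bigr]\\
&\le \sup_{f\in\mathcal{F}}\bigl(R(f)-\hat R_{n_0}(f)\bigr)+\bigl(\hat R_{n_0}(f^{(0)})-R(f^{(0)})\bigr).
\end{align*}
The second term involves the single fixed function $f^{(0)}$. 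Because $|\ell|\le B_\ell$, Hoeffding's inequality bounds it by $B_\ell\sqrt{2\log(2/\delta)/n_0}$ with probability at least $1-\delta/2$.

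For the supremum, define $Z=\sup_{f\in\mathcal{F}}(R(f)-\hat R_{n_0}(f))$. Changing one sample pair changes $Z$ by at most $2B_\ell/n_0$, so McDiarmid's inequality gives $Z\le \mathbb{E}Z+B_\ell\sqrt{2\log(2/\delta)/n_0}$ with probability at least $1-\delta/2$. Standard symmetrization with ghost samples then gives $\mathbb{E}Z\le 2\,\mathbb{E}\bigl[\hat{\mathrm{Rad}}_{n_0}(\ell\circ\mathcal{F})\bigr]$, where $\ell\circ\mathcal{F}=\{(x,y)\mapsto \ell(y,f(x)):f\in\mathcal{F}\}$.

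The main step, and the one needing care, is passing from $\ell\circ\mathcal{F}$ to $\mathcal{F}$. Assumption~\ref{assump:basic_reg}(a) says that for each fixed $y$, the map $t\mapsto \ell(y,t)$ is $B_\ell$-Lipschitz. I would apply Talagrand's contraction lemma coordinatewise, using $\phi_i(t)=\ell(Y_i^{(0)},t)$; the lemma allows a different Lipschitz function at each index, and we condition on the $Y_i$. This gives $\mathbb{E}_\sigma\sup_f\sum_i\sigma_i\phi_i(f(X_i))\le B_\ell\,\mathbb{E}_\sigma\sup_f\sum_i\sigma_i f(X_i)$.

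One subtlety is that the paper defines the Rademacher complexity with an absolute value, and the contraction lemma with absolute values normally requires $\phi_i(0)=0$. I would handle this by working with the version without absolute values from symmetrization onward. That version is bounded by the absolute-value version, so the final bound is still in terms of $\mathrm{Rad}_{n_0}(\mathcal{F})$ as defined in the paper. Alternatively, one can center at $\phi_i(0)$, which costs an extra $B_\ell/\sqrt{n_0}$ term; that term is absorbed into the $\sqrt{\log(1/\delta)/n_0}$ term.

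Taking the expectation over the sample and applying a union bound over the two events yields, with probability at least $1-\delta$,
$$R(\hat f^{(0)})-R(f^{(0)})\le 4B_\ell\,\mathrm{Rad}_{n_0}(\mathcal{F})+C B_\ell\sqrt{\frac{\log(2/\delta)}{n_0}}$$
for an absolute constant $C$. This is the claimed bound up to constants, since $\log(2/\delta)\lesssim\log(1/\delta)$ for $\delta\le 1/2$.
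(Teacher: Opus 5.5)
Your proposal is correct and is essentially the argument the paper relies on: the paper does not reprove this proposition but cites it as the classical ERM bound (Theorem 26.5 of Shalev-Shwartz and Ben-David), whose proof is the same decomposition you use. That proof combines McDiarmid plus symmetrization for the uniform deviation, Hoeffding at the fixed comparator $f^{(0)}$, and the Lipschitz contraction lemma applied coordinatewise with $\phi_i(t)=\ell(Y_i^{(0)},t)$. Your handling of the absolute value in the paper's definition of $\mathrm{Rad}_{n_0}$ is right, since the version without absolute values is dominated by it. Your steps in fact give the constant $2B_\ell$ rather than $4B_\ell$ in front of $\mathrm{Rad}_{n_0}(\mathcal{F})$, which is immaterial under $\lesssim$.
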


Proposition~\ref{prop:target_only_excess} is a classical result in statistical learning theory; see, \emph{e.g.}, Theorem 26.5 in \citet{shalev2014understanding}. The upper bound remains valid if $\mathrm{Rad}_{n_0}(\mathcal{F})$ is replaced by its empirical counterpart $\hat{\mathrm{Rad}}_{n_0}(\mathcal{F})$. In many common settings, $\mathrm{Rad}_{n_0}(\mathcal{F})$ scales as $O\left(\frac{1}{\sqrt{n_0}}\right)$ up to some $\log n_0$ factor, with constants depending critically on the complexity of the functional class $\mathcal{F}$. The representative examples are summarized in \autoref{tab:rad_complexity}.

\subsection{Excess Risk Under Our TLCQM framework}

\begin{theorem}
	\label{thm:transfer_learn_excess_risk}
	Under Assumptions~\ref{assump:basic_reg} and \ref{assump:quantile_match}, the transfer learning prediction function $\hat{f}^{(0,tl)}$ in \eqref{tranfer_learn_pred} has its excess risk satisfying that, with probability at least $1-\delta$,
	\begin{align*}
		&R(\hat{f}^{(0,tl)}) - R(f^{(0)}) \\
		&\lesssim \underbrace{\mathrm{Rad}_N(\mathcal{F}) + \sqrt{\frac{K\log(1/\delta)}{N}}}_{\textbf{Standard generalization error}} + \underbrace{\frac{1}{N} \sum_{k=1}^K \norm{\hat{w}_k-w_k}_1}_{\textbf{Importance weight error}} \\
		&\quad + \underbrace{\inf_{\bm{\beta}_*\in \mathcal{B}}\norm{\hat{\bm{\beta}} - \bm{\beta}_*}_1}_{\textbf{Quantile matching error}} + \underbrace{\sum_{k=1}^K \norm{\hat{g}^{(k)} - g^{(k)}}_{\infty}}_{\textbf{Distributional learning error}} \\
		&\quad + \underbrace{\inf_{\bm{\beta}_*\in \mathcal{B}} \sqrt{\int_0^1 \left[Q_{Y^{(0)}}(\alpha) - Q_{\bm{\beta}_*^T\bm{V}}(\alpha) \right]^2 d\alpha}}_{\textbf{Transfer bias}}
	\end{align*}
	when $N=\sum_{k=0}^K n_k\gg n_0$, where $\norm{\hat{w}_k - w_k}_1 = \sum_{i=1}^{n_k}\left|\hat{w}_k(X_i^{(k)}) - w_k(X_i^{(k)}) \right|$, $\mathcal{B}$ denotes the solution set of \eqref{quantile_ls_true}, and $\norm{\hat{g}^{(k)} - g^{(k)}}_{\infty} =\sup_{x\in \mathcal{X}} \mathbb{E}_{\eta}\left|\hat{g}^{(k)}(x,\eta) - g^{(k)}(x,\eta)\right|$ for $k=1,...,K$.
\end{theorem}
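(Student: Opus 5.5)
The plan is the usual ERM decomposition, with an intermediate ``oracle'' weighted risk serving as the comparison. Write $\hat R_{tl}(f)$ for the empirical objective in \eqref{tranfer_learn_pred}. Let $\tilde R_{tl}(f)$ be the same objective with $\hat w_k$ replaced by $w_k$, and with the synthetic labels $\hat{\bm{\beta}}^T\hat{\bm V}_i^{(k)}$ replaced by oracle labels $\tilde Y_i^{(k)} := \bm{\beta}_*^T\bm V_i^{(k)}$. Here $\bm V_i^{(k)}$ uses the true generators $g^{(j)}(X_i^{(k)},\eta)$ with the same noise draws, and $\bm{\beta}_*\in\mathcal B$ is arbitrary, with the infimum taken at the end. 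Let $R_{tl}(f)=\mathbb E[\tilde R_{tl}(f)]$. For $k\ge1$, importance weighting gives $\mathbb E[w_k(X^{(k)})\ell(\tilde Y^{(k)},f(X^{(k)}))]=\mathbb E_{X\sim P^{(0)}}[\ell(\bm{\beta}_*^T\bm V,f(X))]$, where $\bm V$ is as in Assumption~\ref{assump:quantile_match}(b). Then
\[
R(\hat f^{(0,tl)})-R(f^{(0)}) \le 2\sup_f|R-R_{tl}| + 2\sup_f|R_{tl}-\tilde R_{tl}| + 2\sup_f|\tilde R_{tl}-\hat R_{tl}|,
\]
using $\hat R_{tl}(\hat f^{(0,tl)})\le \hat R_{tl}(f^{(0)})$. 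Each of the three terms is then bounded separately.

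\emph{Generalization term.} The quantity $\tilde R_{tl}$ is an average of $N$ independent terms, grouped into $K+1$ blocks of i.i.d.\ variables, each bounded by $B_wB_\ell$ by Assumption~\ref{assump:basic_reg}. Apply symmetrization blockwise together with the Talagrand contraction. The loss is $B_\ell$-Lipschitz in its second argument and the weights are bounded, so this yields $\lesssim \mathrm{Rad}_N(\mathcal F)$. For concentration, use McDiarmid (or a union bound over the $K+1$ blocks), which produces $\sqrt{K\log(1/\delta)/N}$.

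\emph{Plug-in term} $|\tilde R_{tl}-\hat R_{tl}|$. Split it via the triangle inequality into two parts:
\begin{itemize}
\item \emph{Weight error.} This contributes $\frac{1}{N}\sum_k\sum_i|\hat w_k-w_k|B_\ell$, which is exactly the importance weight error.
\item \emph{Label error.} By the Lipschitz property in the first argument, this contributes $\frac{B_wB_\ell}{N}\sum_{k,i}|\hat{\bm\beta}^T\hat{\bm V}_i^{(k)}-\bm{\beta}_*^T\bm V_i^{(k)}|$. Bound each summand by
\[
|(\hat{\bm\beta}-\bm\beta_*)^T\hat{\bm V}_i^{(k)}|+|\bm\beta_*^T(\hat{\bm V}_i^{(k)}-\bm V_i^{(k)})| \le (1+B_g)\|\hat{\bm\beta}-\bm\beta_*\|_1 + B_\beta\sqrt{K}\sum_j|\hat g^{(j)}-g^{(j)}|.
\]
\end{itemize}
This uses Hölder's inequality with Assumption~\ref{assump:quantile_match}. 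Taking expectation over $\eta$, or a conditional concentration step absorbed into the generalization term, turns the last sum into the distributional learning error with the $\sup_x\mathbb E_\eta$ norm.

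\emph{Transfer bias term} $|R-R_{tl}|$. The target block has weight $n_0/N$ and matches $R$ exactly. For the source blocks we need
\[
\big|\mathbb E[\ell(Y^{(0)},f(X^{(0)}))]-\mathbb E[\ell(\bm\beta_*^T\bm V,f(X^{(0)}))]\big|.
\]
This is the main obstacle. The Lipschitz bound gives $B_\ell\,\mathbb E|Y^{(0)}-\bm\beta_*^T\bm V|$ under some coupling of the two responses given $X^{(0)}$. However, $S(\bm\beta_*)$ controls only the $W_2$ distance between the \emph{marginals} of $Y^{(0)}$ and $\bm\beta_*^T\bm V$, not the conditional laws. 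I would exploit the freedom to choose the noise in the oracle labels: realize both responses through the quantile coupling $Y^{(0)}=Q_{Y^{(0)}}(U)$ and $\bm\beta_*^T\bm V=Q_{\bm\beta_*^T\bm V}(U)$, so that $\mathbb E|\cdot|\le W_1\le W_2=\sqrt{S(\bm\beta_*)}$ by Cauchy--Schwarz. To justify this, I would interpret the risk comparison, as the paper's framing of quantile matching suggests, at the level of aligned response distributions. Failing that, I would state an implicit assumption that $\ell(\cdot,f(x))$ depends on the pair only through this monotone coupling.

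Finally, combine the three bounds, take the infimum over $\bm\beta_*\in\mathcal B$, and note that the $n_0/N$ weighting of the target block is negligible when $N\gg n_0$.
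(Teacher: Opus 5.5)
\paragraph{Overall.} Your decomposition is essentially the paper's. The paper routes through unobserved target labels $Y_{ki}^{(0)}\sim P^{(0)}(\cdot\mid X_i^{(k)})$ and splits the label error into Terms IVa--IVc. You instead compare directly with the oracle labels $\bm\beta_*^T\bm V_i^{(k)}$, which also avoids the cross term. The generalization, weight and plug-in pieces go through as you describe, with two small remarks. First, both you and the paper use Lipschitz continuity of $\ell$ in its \emph{first} argument, which Assumption~\ref{assump:basic_reg}(a) does not literally provide. Second, $|\bm\beta_*^T\bm u|\le\norm{\bm\beta_*}_2\norm{\bm u}_1$ removes your $\sqrt K$.

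\paragraph{The gap.} The genuine gap is the step you flagged yourself, and your fix does not close it. To write $\mathbb E[\ell(Y^{(0)},f(X^{(0)}))]-\mathbb E[\ell(\bm\beta_*^T\bm V,f(X^{(0)}))]$ as $\mathbb E[\ell(Y,f(X))-\ell(Z,f(X))]$, the coupling must feed the \emph{same} $X$ into both terms. That is, $(X,Y)$ must have the law of $(X^{(0)},Y^{(0)})$ and $(X,Z)$ that of $(X^{(0)},\bm\beta_*^T\bm V)$. The comonotone pair $(Q_{Y^{(0)}}(U),Q_{\bm\beta_*^T\bm V}(U))$ in general admits no such extension. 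The best admissible coupling yields $\mathbb E_X W_1\bigl(P^{(0)}(\cdot\mid X),\mathcal L(\bm\beta_*^T\bm V\mid X)\bigr)$. The $(Y,Z)$-marginal of any admissible coupling is a coupling of the two marginals, so this quantity is bounded \emph{below} by the marginal $W_1$, not above by $\sqrt{S(\bm\beta_*)}$. The paper's proof crosses the same step by asserting that the expected conditional $W_2$ \emph{equals} $\sqrt{S(\bm\beta_*)}$, which is false for the same reason.

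\paragraph{The statement fails as written.} No argument can bridge this step, as the following example shows. Take $K=1$ and $X^{(0)},X^{(1)}\sim\mathrm{Unif}[0,1]$, so $w_1\equiv1$. Let $g^{(0)}(x,\eta)=x$ and $g^{(1)}(x,\eta)=\eta$ with $\eta\sim\mathrm{Unif}[0,1]$. Use squared loss and $\mathcal F=\{x\mapsto a+bx:\ |a|,|b|\le 1\}$.
\begin{itemize}
\item Then $\mathcal B=\{(0,1)^T,(1,-1)^T\}$ and $S\equiv 0$ on $\mathcal B$, yet the corresponding labels $\eta_i$ or $1-\eta_i$ are independent of $X_i^{(1)}$.
\item Set $\hat g^{(1)}=g^{(1)}$ and $\hat w_1=w_1$, and let $n_0\to\infty$ with $n_1/n_0\to\infty$. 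Every term on the right-hand side tends to zero, including $\inf_{\bm\beta_*\in\mathcal B}\norm{\hat{\bm\beta}-\bm\beta_*}_1$.
\item Meanwhile $\hat f^{(0,tl)}\to 1/2$, so the excess risk tends to $1/12$.
\end{itemize}

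\paragraph{What is actually provable.} Your fallback assumption is therefore not a technicality. What your argument (and the paper's) actually establishes is the bound with the transfer-bias term replaced by a \emph{conditional} discrepancy. One choice is $\mathbb E_{X\sim P^{(0)}}W_1\bigl(P^{(0)}(\cdot\mid X),\mathcal L(\bm\beta_*^T\bm V\mid X)\bigr)$. Another is a same-covariate coupling quantity of the kind in the paper's alternative bound \eqref{transfer_bias2}. The marginal quantity $\sqrt{S(\bm\beta_*)}$ can enter only under an additional hypothesis linking the conditional laws to the marginals.
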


The proof of \autoref{thm:transfer_learn_excess_risk} is in \autoref{app:proof}, where the transfer-learning excess risk is decomposed into five error terms. 
\vspace{-2mm}
\begin{itemize}
	\setlength\itemsep{0.1em}
	\item \textbf{Standard generalization error:} This term mirrors the classical ERM bound but scales with the augmented sample size $N$ rather than $n_0$. It is strictly smaller than the target-only generalization error whenever $N\gg n_0$.
	
	\item \textbf{Importance weight error:} This term captures the error induced by estimating density ratios under covariate shift. When the penalized risk minimization \citep{nguyen2007estimating} is applied, the rate is $O_P\left(\frac{1}{N}\sum_{k=1}^K \left(n_k^{-\frac{1}{2+\tau}} + n_k^{-\frac{1}{4}}\right) \right)$ up to some $\log n_k$ factors (see Lemma 8 in \citealt{reddi2015doubly}). When the kernel mean matching \citep{gretton2009covariate} is leveraged, the rate becomes $O_P\left(\frac{1}{N}\sum_{k=1}^K n_k^{-\frac{1}{2}}\right)$ up to some $\log n_k$ factors. In both cases, this term vanishes faster than the target-only excess risk under mild growth of $n_k$ for $k=1,...,K$.
	
	\item \textbf{Quantile matching error:} We derive its rate of convergence in \autoref{thm:quantile_rate}. The rate consists of two components. The first component is a stochastic variation term, which scales as the product of ``transfer bias'' and $O_P\left(n_0^{-\frac{1}{4}}\right)$, together with an additional parametric rate $O_P\left(n_0^{-\frac{1}{2}}\right)$. The second component is a bias term that arises from errors in learning the source conditional distributions and is thus tied to the ``distributional learning error'' discussed below. Consequently, the quantile matching error decays faster than the target-only excess risk, provided that both the ``transfer bias'' and the ``distributional learning error'' vanish at the rate $o_P\left(n_0^{-\frac{1}{2}}\right)$. As we argue below, these conditions are attainable under mild regularity assumptions.
	
	\item \textbf{Distributional learning error:} This term depends solely on the (expected) $L_{\infty}$ prediction errors in the source domains. Under standard smoothness assumptions, the optimal nonparametric error rate is $O\left(\sum_{k=1}^K n_k^{-\frac{s}{d+2s}}\right)$, where $s$ is the level of H\"older smoothness of $g^{(k)}$ for $k=1,...,K$ \citep{stone1980optimal,stone1982optimal}. This rate is attainable by most of parametric or nonparametric methods, including sieve methods \citep{chen2013optimal} and neural networks \citep{imaizumi2023sup}. Within the engression framework, when $g^{(k)}$ is quadratic in $x$ and $\eta$, the rate of convergence for $\norm{\hat{g}^{(k)} - g^{(k)}}_{\infty}$ is at most $O_P\left(n_k^{-\frac{1}{3}}\right)$ up to a $\log n_k$ factor for $k=1,...,K$. 
	
	\item \textbf{Transfer bias:} This term captures the intrinsic approximation error of the population-level quantile matching procedure \eqref{quantile_ls_true} relative to the target response distribution. Unlike the preceding terms, it reflects a fundamental limitation of transfer learning rather than an estimation error.
	As shown in \eqref{transfer_bias2} of \autoref{app:proof}, this bias admits an equivalent representation as $\mathbb{E}_{P^{(k)}\times P_{\eta}}\left|g^{(0)}(X,\eta) -\beta_0-\sum_{k=1}^K \beta_k\cdot g^{(k)}(X,\eta)\right|$ across the source domains. Importantly, the ``transfer bias'' vanishes when $P^{(0)}$ lies in the convex hull of $P^{(k)},k=1,...,K$ and there are no covariate shifts, a setting commonly assumed in multi-source transfer learning (see Section 3 in \citealt{turrisi2022multi}). In such cases, the proposed framework achieves asymptotically unbiased transfer learning.
\end{itemize}

Finally, we emphasize that when the optimal quantile matching estimators are sparse, all those $K$-dependent terms in \autoref{thm:transfer_learn_excess_risk} scale with the number of active source domains rather than the total number of source domains.

\subsection{Quantile Matching Error}

\begin{assumption}[Convergence conditions for quantile matching]
	\label{assump:quantile_rate}
	Let $\mathcal{B}=\argmin_{\bm{\beta}\in \mathbb{R}^{K+1}} S(\bm{\beta})$ as in \eqref{quantile_ls_true} and $R_{\beta}>0$ be a constant.
	\vspace{-3mm}
	\begin{enumerate}[label=(\alph*)]
		\item The density functions $p_{Y^{(k)}}(y) := p^{(k)}(y),k=0,1,...,K$ and $p_{\bm{\beta}^T\bm{V}}(u)$ exist and differentiable for all $\bm{\beta}$ with $\norm{\bm{\beta} - \bm{\beta}_*}_2 \leq R_{\beta}$ and $\bm{\beta}_*\in \mathcal{B}$, where $\bm{V} = \left(1, Y^{(1,0)},...,Y^{(K,0)}\right)$ with $Y^{(k)}\sim P^{(k)}(y|X^{(0)})$.
		
		\item For any $\bm{\beta}$ with $\norm{\bm{\beta} - \bm{\beta}_*}_2 \leq R_{\beta}$ and $\bm{\beta}_*\in \mathcal{B}$, it holds that
		$\sup_{\alpha\in [0,1]}\left|p_{\bm{\beta}^T\bm{V}}'(Q_{\bm{\beta}^T\bm{V}}(\alpha)) \right| < \infty$ and $\inf_{\alpha\in [0,1]}\left|p_{\bm{\beta}^T\bm{V}}(Q_{\bm{\beta}^T\bm{V}}(\alpha)) \right| >0$. Furthermore, $\sup_{\alpha\in [0,1]}\left|p_{Y^{(0)}}'(Q_{Y^{(0)}}(\alpha)) \right| < \infty$ and $\sup_{\alpha\in [0,1]}\left|p_{Y^{(0)}}(Q_{Y^{(0)}}(\alpha)) \right| > 0$.
		
		\item For any $\bm{\beta}_*\in \mathcal{B}$, there exist a constant $\lambda_{\min} >0$ such that the smallest eigenvalue of $\nabla_{\bm{\beta}}^2 S(\bm{\beta}) \in \mathbb{R}^{(K+1)\times (K+1)}$ is uniformly larger than $\lambda_{\min}>0$ for all $\bm{\beta}$ with $\norm{\bm{\beta} - \bm{\beta}_*}_2 \leq R_{\beta}$.
	\end{enumerate}
\end{assumption}

Assumption~\ref{assump:quantile_rate}(a,b) is known as the Kiefer condition, which is commonly assumed for the uniform Bahadur-Kiefer bounds for empirical quantile processes \citep{kiefer1970deviations,kulik2007bahadur}; see also Lemma~\ref{lem:kiefer_bound} in \autoref{app:qr_proof}. Assumption~\ref{assump:quantile_rate}(c) is a standard local convexity condition on the objective function $S(\bm{\beta})$ around its local minima. As we derive in Lemma~\ref{lem:quantile_deriv} that
\begin{align*}
\nabla_{\bm{\beta}}^2 S(\bm{\beta}) &= 2\int_0^1 \left\{\left[\nabla_{\bm{\beta}}Q_{\bm{\beta}^T\bm{V}}(\alpha)\right] \left[\nabla_{\bm{\beta}} Q_{\bm{\beta}^T\bm{V}}(\alpha)\right]^T \right.\\
&\quad \left.- \left[Q_{Y^{(0)}}(\alpha) - Q_{\bm{\beta}^T\bm{V}}(\alpha)\right]\nabla_{\bm{\beta}}^2 Q_{\bm{\beta}^T\bm{V}}(\alpha) \right\}^2 d\alpha,
\end{align*}
Assumption~\ref{assump:quantile_rate}(c) is naturally satisfied when the distributions of $Y^{(0)}$ and $\bm{\beta}_*^T\bm{V}$ at the optimal matching point $\bm{\beta}_*$ are close, \emph{i.e.}, $Q_{Y^{(0)}}(\alpha) - Q_{\bm{\beta}_*^T\bm{V}}(\alpha)\approx 0$ for all $\alpha\in (0,1)$. This observation again highlights the fundamental role of the ``transfer bias'': small population-level mismatch not only improves approximation accuracy but also ensures favorable curvature of the quantile matching objective. This connection mirrors the role of the transfer bias identified in \autoref{thm:transfer_learn_excess_risk} and is central to the theoretical benefits of the proposed TLCQM framework.

\begin{theorem}
\label{thm:quantile_rate}
Under Assumptions~\ref{assump:quantile_match} and \ref{assump:quantile_rate}, it holds that
{\tiny \begin{align*}
&\inf_{\bm{\beta}_*\in \mathcal{B}}\norm{\hat{\bm{\beta}} - \bm{\beta}_*}_1 = O\left(\sqrt{K\sum_{k=1}^K \norm{\hat{g}^{(k)} - g^{(k)}}_{\infty}} \right) + O_P\left(\sqrt{\frac{K\log\log n_0}{n_0}} \right.\\
&\left.+ \sqrt{K}\left(\frac{\log\log n_0}{n_0} \inf_{\bm{\beta}_*\in \mathcal{B}}\int_0^1 \left[Q_{Y^{(0)}}(\alpha) - Q_{\bm{\beta}_*^T\bm{V}}(\alpha) \right]^2 d\alpha\right)^{\frac{1}{4}} \right)
\end{align*}}%
up to some Monte Carlo approximation errors $O\left(\frac{1}{\sqrt{M}}\right)$.
\end{theorem}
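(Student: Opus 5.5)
We prove the bound with an M-estimation argument for the quantile matching objective, writing $\hat S$ for the empirical quantile-matching criterion. Let $\hat S(\bm\beta)=\frac{1}{n_0M}\sum_{i,j}[Y^{(0)}_{(\lceil i\rceil)}-(\bm\beta^T\hat{\bm V})_{(ij)}]^2$, suitably normalized. Up to the Monte Carlo error $O(M^{-1/2})$, this equals $\int_0^1[\hat Q_{Y^{(0)}}(\alpha)-\hat Q_{\bm\beta^T\hat{\bm V}}(\alpha)]^2d\alpha$. Here $\hat Q_{Y^{(0)}}$ is the empirical quantile function of the target responses. The second quantile, $\hat Q_{\bm\beta^T\hat{\bm V}}$, is that of $\bm\beta^T\hat{\bm V}$ with $\hat{\bm V}=(1,\hat g^{(1)}(X^{(0)},\eta),\dots)$, built from the $n_0$ target covariates. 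Fix $\bm\beta_*\in\mathcal B$ closest to $\hat{\bm\beta}$.

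\textbf{Step 1 (consistency).} First we show that $\hat{\bm\beta}$ lies in the ball $\|\bm\beta-\bm\beta_*\|_2\le R_\beta$ with probability tending to one. For this it suffices that $\sup_{\|\bm\beta\|\le C}|\hat S(\bm\beta)-S(\bm\beta)|\to 0$, which is uniform convergence of the empirical quantile processes. This follows from Assumption~\ref{assump:quantile_match}(a,b) together with the Lipschitz dependence of quantiles on $\bm\beta$ under bounded $\bm V$.

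\textbf{Step 2 (quadratic lower bound).} On the ball, Assumption~\ref{assump:quantile_rate}(c) gives $S(\bm\beta)-S(\bm\beta_*)\ge\frac{\lambda_{\min}}{2}\|\bm\beta-\bm\beta_*\|_2^2$ by a Taylor expansion, since $\nabla S(\bm\beta_*)=0$. The basic inequality $\hat S(\hat{\bm\beta})\le\hat S(\bm\beta_*)$ then yields
\[
\tfrac{\lambda_{\min}}{2}\|\hat{\bm\beta}-\bm\beta_*\|_2^2\le [S-\hat S](\hat{\bm\beta})-[S-\hat S](\bm\beta_*).
\]

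\textbf{Step 3 (bounding the increment).} We decompose $\hat S-S$ into two parts, using $a^2-b^2=(a-b)(a+b)$.

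The first part replaces $\hat g^{(k)}$ by $g^{(k)}$. Quantile functions are $1$-Lipschitz in the $L_1$/Wasserstein sense for univariate laws. With bounded $\bm\beta$ and $B_g$, this perturbation is at most $O(\|\bm\beta\|_1\sum_k\|\hat g^{(k)}-g^{(k)}\|_\infty)$ uniformly. It is not localized, so it contributes $\|\hat{\bm\beta}-\bm\beta_*\|_2^2\lesssim\sum_k\|\hat g^{(k)}-g^{(k)}\|_\infty$. Converting with $\|\cdot\|_1\le\sqrt{K+1}\|\cdot\|_2$ gives the first term.

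The second part is stochastic. We write $\hat Q-Q$ for both quantile processes. By the uniform Bahadur--Kiefer bounds (Lemma~\ref{lem:kiefer_bound}) under Assumption~\ref{assump:quantile_rate}(a,b), together with the law of the iterated logarithm for the uniform empirical process, $\sup_\alpha|\hat Q(\alpha)-Q(\alpha)|=O_P(\sqrt{\log\log n_0/n_0})$, uniformly over the ball. Expanding,
\[
\hat S-S=2\!\int(Q_{Y^{(0)}}-Q_{\bm\beta^T\bm V})(\Delta_0-\Delta_{\bm\beta})\,d\alpha+\!\int(\Delta_0-\Delta_{\bm\beta})^2d\alpha .
\]
The squared term is $O_P(\log\log n_0/n_0)$. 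By Cauchy--Schwarz, the cross term is bounded by $\sqrt{S(\bm\beta)}\cdot O_P(\sqrt{\log\log n_0/n_0})$. Evaluating the difference at $\hat{\bm\beta}$ and $\bm\beta_*$, we use $\sqrt{S(\hat{\bm\beta})}\le\sqrt{S(\bm\beta_*)}+O(\|\hat{\bm\beta}-\bm\beta_*\|)$. Plugging back gives
\[
\|\hat{\bm\beta}-\bm\beta_*\|_2^2\lesssim \frac{\log\log n_0}{n_0}+\sqrt{S(\bm\beta_*)}\sqrt{\frac{\log\log n_0}{n_0}}+\|\hat{\bm\beta}-\bm\beta_*\|_2\sqrt{\frac{\log\log n_0}{n_0}}.
\]
Solving this quadratic inequality and taking square roots produces the $\sqrt{\log\log n_0/n_0}$ term and the $(\frac{\log\log n_0}{n_0}S(\bm\beta_*))^{1/4}$ term. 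Multiplying by $\sqrt{K}$ for the $\ell_1$ norm, and taking the infimum over $\mathcal B$, completes the bound.

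\textbf{Main obstacle.} The hardest step is the uniform-in-$\bm\beta$ control of the quantile process of $\bm\beta^T\bm V$. The Bahadur--Kiefer constants must hold uniformly over the $R_\beta$-ball, which is why the Kiefer conditions are imposed for all such $\bm\beta$. A further complication is that $\hat{\bm V}$ involves $\hat g^{(k)}$, which are estimated on independent source data. To handle this, we would condition on the source data so that $\hat{\bm V}$ has i.i.d. entries across target covariates. The Monte Carlo draws contribute only the additive $O(M^{-1/2})$ term, as stated in the theorem.
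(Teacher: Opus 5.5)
Your proposal is correct and follows essentially the same route as the paper. Both arguments localize $\hat{\bm\beta}$, combine the curvature bound from Assumption~\ref{assump:quantile_rate}(c) with the basic inequality, control the generative-model perturbation by a uniform (non-localized) rearrangement/Wasserstein bound as in Lemma~\ref{lem:quan_obj_diff}, and bound the stochastic term by $\sqrt{S(\bm\beta)}\cdot O_P(\sqrt{\log\log n_0/n_0})$ via Bahadur--Kiefer plus Cauchy--Schwarz. The only real difference is how you close the self-referential inequality: you use the $W_2$ triangle inequality $\sqrt{S(\hat{\bm\beta})}\le\sqrt{S(\bm\beta_*)}+O(\|\hat{\bm\beta}-\bm\beta_*\|_2)$, whereas the paper solves a fixed-point inequality in $r_n=S(\hat{\bm\beta})-S(\bm\beta_*)$; both yield the same rate.
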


The proof of \autoref{thm:quantile_rate} is in \autoref{app:qr_proof}. As discussed after \autoref{thm:transfer_learn_excess_risk}, the explicit dependence of the convergence rate on the ``transfer bias'' $\inf_{\bm{\beta}_*\in \mathcal{B}} S(\bm{\beta}_*)$ is essential for the improved performance of our proposed TLCQM framework.

\section{Numerical Experiments}
\label{sec:experiments}

In this section, we empirically evaluate the performances of several machine learning models under our proposed TLCQM framework in Algorithm~\ref{algo:CQM} and compare them with target-only machine learning models. These experiments are designed to substantiate our theory in \autoref{sec:theory}. Furthermore, we benchmark our TLCQM method against several competing transfer learning approaches for regression under distributional shift, including (i) transfer learning with kernel ridge regression and automatic source selection (TKRR; \citealt{wang2023transfer}), (ii) deep transfer learning for conditional shift in regression (CDAR; \citealt{liu2021deep}), and (iii) multi-domain adaptation for regression under conditional shift (DARC; \citealt{taghiyarrenani2023multi}). Implementation details for all methods are provided in \autoref{app:imple_detail}, and the evaluations are conducted via simulation studies, experiments on a public dataset, and a real-world machine learning application at a technology company. 


\subsection{Simulation Studies}

\begin{figure*}[!ht]
	\centering
	\captionsetup[subfigure]{justification=centering}
	\begin{subfigure}[t]{0.487\linewidth}
		\centering
		\includegraphics[width=1\linewidth]{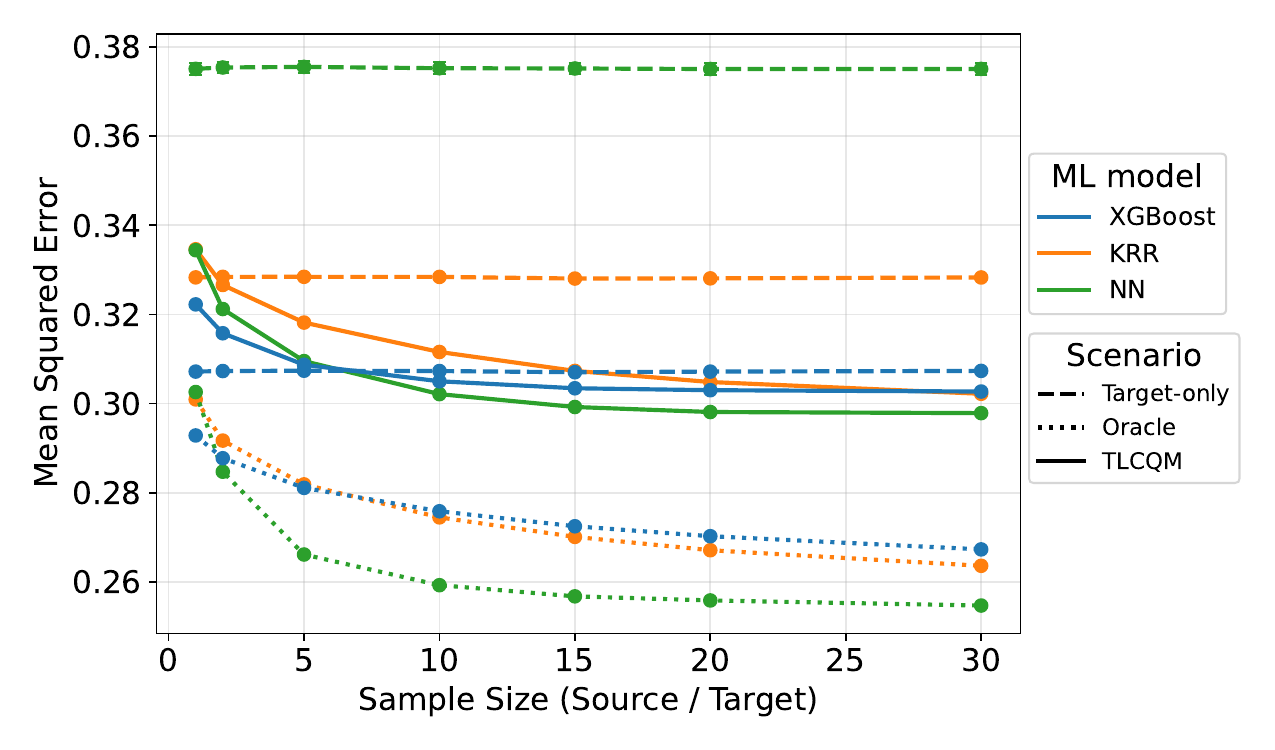}
		\caption{MSE as a function of the source-to-target sample size ratio for different machine learning models under three data scenarios.}
	\end{subfigure}
	\hfil
	\begin{subfigure}[t]{0.487\linewidth}
		\centering		\includegraphics[width=1\linewidth]{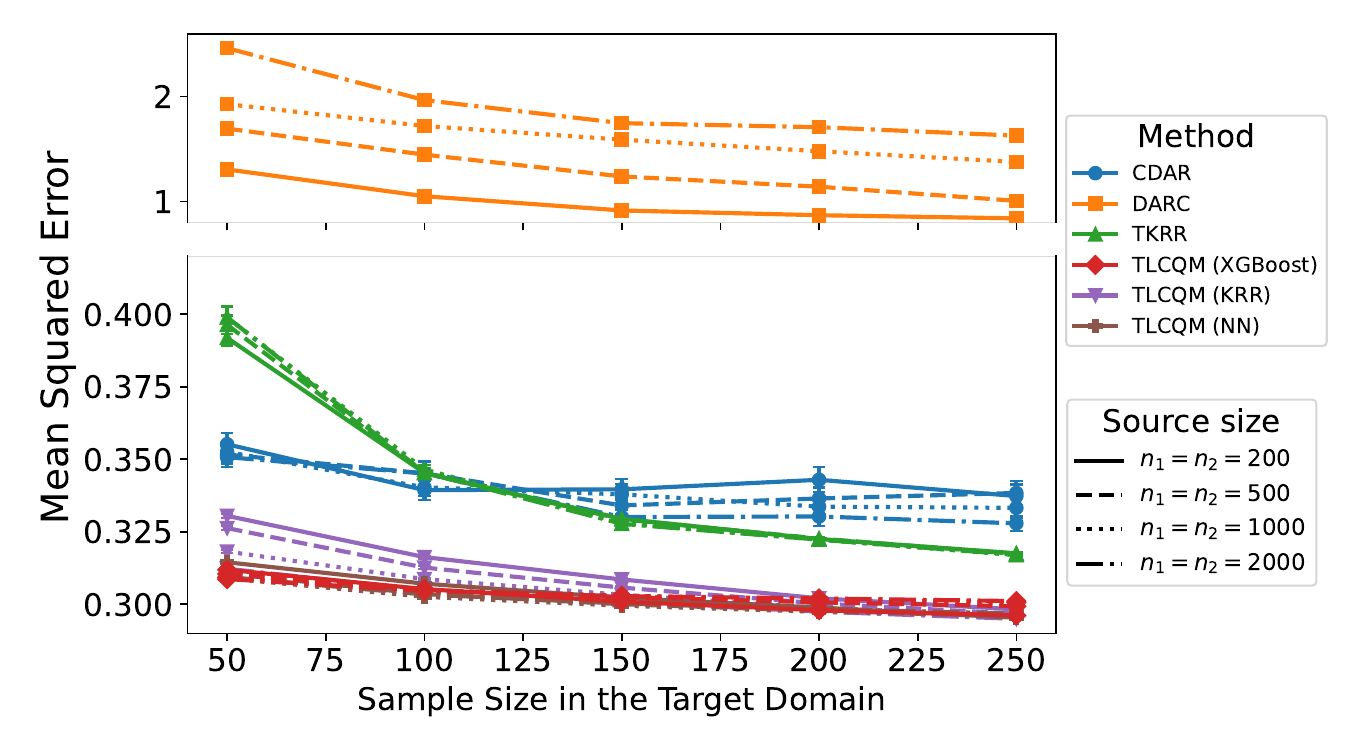}
		\caption{MSE as a function of the target-domain sample size for different transfer learning methods.}
	\end{subfigure}
	\caption{Prediction performances of different machine learning models applied to various data scenarios and other competing transfer learning methods on simulated data.}
	\label{fig:sim1}
	\vspace{-4mm}
\end{figure*}

We consider a simulation setting in which both covariate shift and concept shift are present between the source and target domains. For the two candidate source domains, $Y^{(1)} = \sin\left(3\theta^T X^{(1)}\right) + 1+ \epsilon$ and $Y^{(2)} = \cos\left(3\theta^T X^{(2)}\right) + 1+ \epsilon$, where $\theta=\left(1,\frac{1}{2},...,\frac{1}{6}\right)^T\in \mathbb{R}^6$ and $X^{(1)},X^{(2)}\sim \mathcal{N}(\bm{1}_6,\bm{I}_6), \epsilon\sim \mathcal{N}\left(0,0.25\right)$ with $\bm{1}_6=(1,...,1)^T \in \mathbb{R}^6$ and $\bm{I}_6\in \mathbb{R}^{6\times 6}$ being the identity matrix. For the target domain, $Y^{(0)} = \frac{1}{3}\sin\left(3\theta^T X^{(0)}\right) -3 + \epsilon$ with $X^{(0)}\sim \mathcal{N}(\bm{0}_6,0.25\cdot \bm{I}_6)$ and $\epsilon\sim \mathcal{N}\left(0,0.25\right)$. 


We vary the target-domain sample size $n_0$ from 50 to 150 and adjust the ratio $\frac{n_k}{n_0}$ between source and target sample sizes from 1 to 30. To empirically consolidate the improvement of data augmentation by our Algorithm~\ref{algo:CQM}, we focus on three widely used machine learning models: XGBoost \citep{chen2016xgboost}, kernel ridge regression (KRR), and neural network (NN), and apply them to three data scenarios: target-only training, oracle training, and augmented training with our proposed TLCQM framework. Here, the oracle data correspond to samples drawn directly from the target-domain distribution, with a total sample size equal to the combined sizes of the source and target datasets.

\autoref{fig:sim1}(a) reports the average mean square errors (MSEs) across 1000 Monte Carlo repetitions for each value of $n_0$. Among all the machine learning methods, the TLCQm-augmented data consistently improves predictive performances relative to the target-only training, with particularly pronounced gains when the sample size ratio between the source and target domains is large (\emph{i.e.}, rich source data and scarce target data). These improvements are most evident for NN and KRR, which are known to benefit more strongly from increased sample sizes than tree-based methods such as XGBoost.
We further compare our TLCQM framework with competing transfer learning methods that directly leverage both source and target data. As shown in \autoref{fig:sim1}(b), all machine learning models trained on TLCQM-augmented data outperform TKRR, CDAR, and DARC across the considered target sample sizes.



\subsection{Experiments on Real Public Data}

To showcase the effectiveness of our proposed TLCQM framework on real-world data, we next evaluate it on a public dataset \texttt{Apartment} from the UCI machine learning repository \citep{kelly2019uci}. The response variable $Y$ is defined as the logarithm of the apartment rental price. The covariate vector $X$ comprises 10 features, including the numbers of bathrooms and bedrooms, the apartment size in square feet, and 7 binary indicators capturing the presence of photos, parking, storage, a gym, a pool, and whether cats and dogs are permitted.
The source domains consist of observations from three randomly selected states: \texttt{IL} ($n_1=1036$), \texttt{OH} ($n_2=4905$), and \texttt{WA} ($n_3=2612$). The target domain is the state \texttt{FL}, which contains 5775 observations in total. To construct the training data for the target domain, we randomly subsample $n_0\in \left\{100,200,300,500 \right\}$ observations from \texttt{FL} and reserve the remaining observations as the test set.

\vspace{-2mm}
\begin{figure}[H]
	\centering
	\includegraphics[width=1\linewidth]{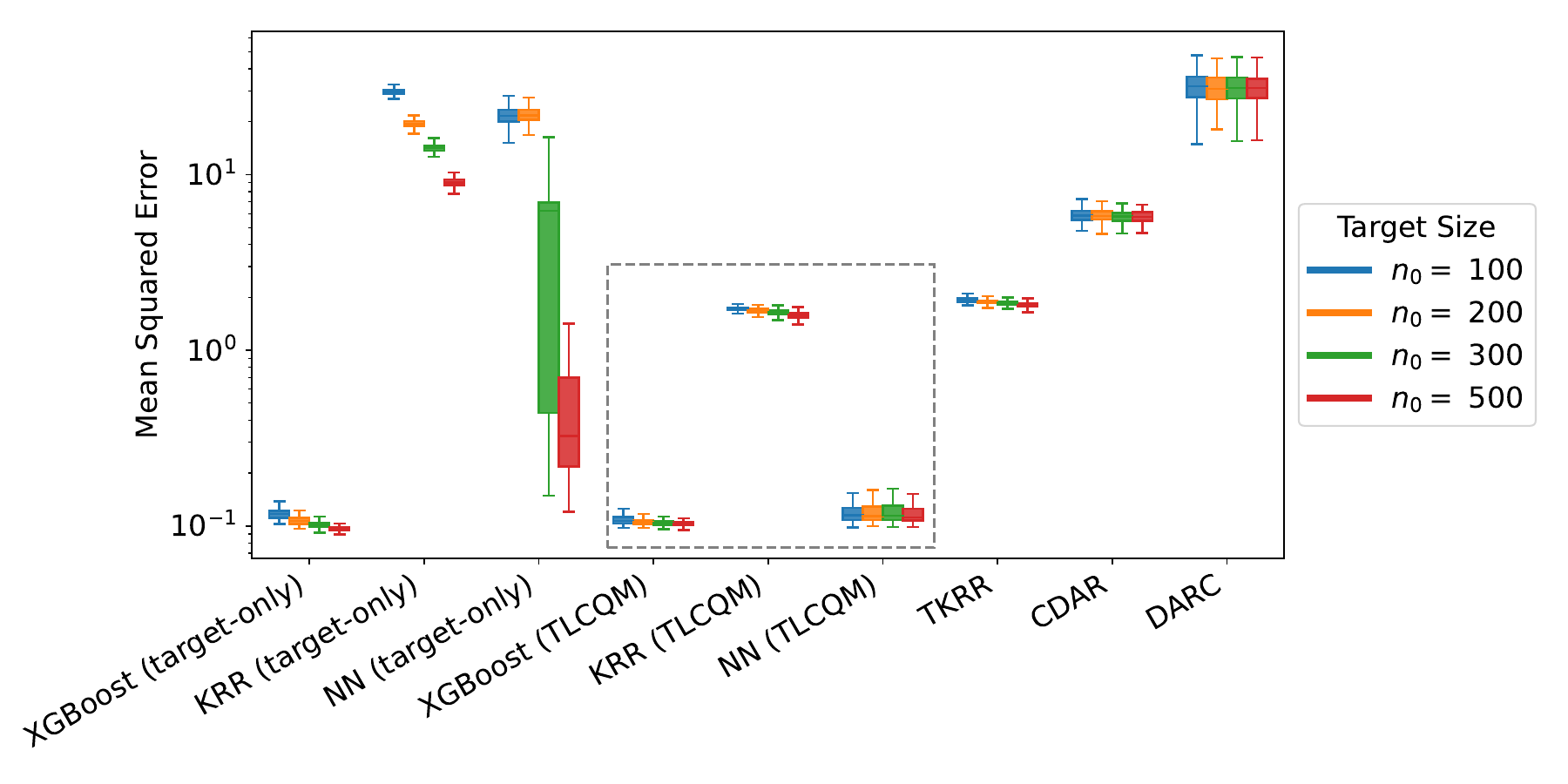}
	\caption{Performances of different machine learning models applied to target-only and TLCQM scenarios as well as other competing transfer learning methods on the ``Apartment'' data.}
	\label{fig:sim_apart}
\end{figure}

\begin{table}[t]
\centering
\begin{tabularx}{\linewidth}{C|CCCC}
\toprule
 $n_0$ & $100$ & $200$ & $300$ & $500$ \\
\midrule
XGBoost & 7.16\% & 1.59\% & -2.28\% & -6.80\% \\
KRR & 94.20\% & 91.41\% & 88.43\% & 82.53\% \\
NN & 99.43\% & 99.39\% & 96.90\% & 87.98\% \\
\bottomrule
\end{tabularx}
\caption{Percentage reduction in MSE achieved by machine learning models trained on TLCQM-augmented data relative to their target-only counterparts.}
\label{tab:mse_improve_apart}
\vspace{-5mm}
\end{table}

\begin{table*}[!t]
	\centering
	\begin{tabular}{c|ccc|ccc|ccc}
		\toprule
		& \multicolumn{3}{c}{\textbf{Target-Only}} & \multicolumn{3}{c}{\textbf{TLCQM}} & \multirow{2}{*}{\textbf{TKRR}} & \multirow{2}{*}{\textbf{CDAR}} & \multirow{2}{*}{\textbf{DARC}} \\
		& XGBoost & NN & KRR & XGBoost & NN & KRR & & & \\
		\midrule
		Platform I & 4.596  & 2.910 & 33.281 & \textbf{1.545 (0.198)} & \textbf{1.689 (0.146)} & \textbf{1.556 (0.207)} & 2.204 & 3.183 & 36.380\\
		Platform II & 2.117 & 2.144 & 27.577 & \textbf{1.364 (0.034)} & \textbf{1.841 (0.015)} & \textbf{1.377 (0.034)} & 2.348 & 1.867 & 10.884\\
		\bottomrule
	\end{tabular}
	\vspace{0.5mm}
	\caption{(Average) MSEs of different machine learning models trained on target-only and TLCQM-augmented data as well as other competing transfer learning methods for the MAU prediction task across two device platforms. Standard errors of the MSEs based on 30 Monte Carlo experiments are reported in parenthesis, where those values smaller than 0.001 are omitted for brevity.}
	\vspace{-3mm}
	\label{tab:MAU_res}
\end{table*}

The results, summarized in \autoref{fig:sim_apart} and \autoref{tab:mse_improve_apart} based on 500 Monte Carlo replications, show that machine learning models trained under our proposed TLCQM framework consistently have smaller MSEs than their target-only counterparts as well as other competing transfer learning methods. The performance gains are particularly pronounced for NN, exceeding those observed for KRR and XGBoost. This pattern aligns with existing findings in the literature that NN typically requires substantially larger sample sizes to surpass tree-based methods (such as XGBoost) on tabular data
\citep{shwartz2022tabular}. 

\subsection{Case Study: Monthly Active User Prediction}



We apply our proposed TLCQM framework (Algorithm~\ref{algo:CQM}) to a real-world task involving the prediction of monthly active users (MAUs) for peer apps within a technology company. The dataset consists of country-level MAU measurements collected over a single month, covering two device platforms and five distinct apps. To alleviate the heavy-tailed distribution, we apply a logarithmic transformation $u\mapsto \log(1+u)$ to all MAU measurements in this case study. Each app is treated as a separate data domain, with approximately 200 observations corresponding to different countries or platforms. The feature vector $X$ includes MAU statistics from other apps as well as a set of country-specific covariates, while the response $Y\in [0,\infty)$ represents the (log-transformed) country-level MAU for each app.


Our goal is to predict the country-level MAU for an app developed by a peer company, for which the third-party MAU estimates are available for a subset of countries. This problem clearly falls into the scenario \eqref{target_shift}, as both the covariate distribution and the conditional country-level MAU distribution vary across apps. To emulate this business scenario while enabling systematic evaluation, we adopt a hold-one-out validation, where 4 out of the 5 apps are randomly selected as source domains with complete MAU information, and the remaining app is treated as the target domain, for which MAU statistics are observed only for the aforementioned subset of countries. The proposed TLCQM framework is then employed to augment or impute the missing MAU values for the remaining countries in the target app. Furthermore, to ensure the nonnegativity of the generated MAU values, we impose nonnegative constraints on the coefficients of 
the quantile matching estimator \eqref{quantile_ls} in our proposed framework, which can be interpreted as an implicit form of regularization tailored to this data application.

The empirical results are summarized in
\autoref{tab:MAU_res}, which reports the average MSEs for each machine learning model trained on data augmented by our proposed TLCQM framework. For comparison, we also include results from machine learning models trained on the target-only data as well as other competing transfer learning methods that leverage both source and target data, including TKRR, CDAR, and DARC that we mentioned earlier.
Since our proposed Algorithm~\ref{algo:CQM} involves generative model learning and synthetic data generation, their reported MSEs exhibit a small degree of Monte Carlo variability as in \autoref{tab:MAU_res}. Nevertheless, across all models, the TLCQM-based approaches consistently outperforms the target-only machine learning models and other competing transfer learning methods, even after accounting for this additional randomness. These results demonstrate the robustness and practical relevance of the proposed framework in a business-critical application.

\section{Discussion}
\label{sec:discuss}

In summary, this work presents a novel transfer learning framework for generating high-quality synthetic regression data that mimics the target-domain distribution. By learning conditional generative models from heterogeneous source-domain data and calibrating the generated samples via conditional quantile matching, the proposed approach yields substantial improvements in downstream prediction performance, supported by both theoretical guarantees and empirical evidence.

Although our analysis focuses on regression with univariate continuous responses, the proposed framework naturally extends to more general settings. For multivariate continuous responses, the quantile matching estimator in \eqref{quantile_ls} can be generalized via a weighted sum of coordinate-wise Mallows' distances. For binary or categorical responses in classification tasks, the objective in \eqref{quantile_ls} reduces to a generalized Brier score.
More broadly, the distributional calibration step in our proposed framework could be implemented using alternative metrics or operators, such as isotonic regression \citep{barlow1972statistical}, the sliced Wasserstein metric \citep{bonneel2015sliced}, the Sinkhorn distance \citep{cuturi2013sinkhorn}, or maximum mean discrepancy \citep{cui2020calibrated}. A systematic investigation of these extensions is left for future work.


\section*{Acknowledgements}

The authors thank Xu Chen, Rizen Yamauchi, and other members of the Central Applied Science as well as Research Platform teams at Meta for their support and valuable feedback throughout this research.

\section*{Impact Statement}

This paper presents work whose goal is to advance the field of machine
learning. There are many potential societal consequences of our work, none
which we feel must be specifically highlighted here.


\bibliography{TL_bib}
\bibliographystyle{icml2026}

\newpage
\appendix
\onecolumn

The first row in \autoref{tab:rad_complexity} is also known as the finite class lemma or Massart's lemma \citep{massart2000some}. By Sauer's Lemma, if the VC dimension $\mathrm{VC}(\mathcal{F})$ of a function class $\mathcal{F}$ is less than $n$, then $\mathrm{Rad}_n(\mathcal{F}) = O\left(\sqrt{\frac{\mathrm{VC}(\mathcal{F}) \log n}{n}}\right)$.

\begin{table}[ht]
	\centering
	\begin{tabularx}{\textwidth}{CcC}
		\toprule
		\textbf{Function Class}
		& \textbf{Upper Bound}
		& \textbf{References} \\
		\midrule
		Finite hypothesis class $\mathcal F$ with $\sup_{x\in \mathcal{X}}|f(x)|\leq B_f$ for any $f\in \mathcal{F}$.
		& $\displaystyle O\left(B_f\sqrt{\frac{\log |\mathcal{F}|}{n}}\right)$
		& Theorem 3.7 in \citet{mohri2018foundations}; see also \citet{massart2000some}. \\
		
		Linear predictor class $\mathcal{F}=\left\{x\mapsto \Upsilon^Tx: \norm{\Upsilon}_2\leq B_{\gamma},\norm{x}_2 < B_x\right\}$.
		& $\displaystyle O\left(\frac{B_{\gamma} B_x}{\sqrt n}\right)$
		& Theorem 3 in \citet{kakade2008complexity}; see also Theorem 1 in \citet{awasthi2020rademacher}. \\
		
		Sparse linear predictor $\mathcal{F}=\left\{x\mapsto \Upsilon^Tx: x\in \mathbb{R}^d,\norm{\Upsilon}_0=s,\norm{x}_2 < B_x\right\}$.
		& $\displaystyle O\left(B_x\sqrt{\frac{s\log d}{n}}\right)$
		& Section 3.1 in \citet{kakade2008complexity}; see also Lemma 7.1 in \citet{koltchinskii2011oracle}. \\
		
		Reproducing kernel Hilbert space $\mathcal{F}=\left\{f:\mathcal{X}\to \mathcal{H}:\|f\|_{\mathcal H}\leq B_f\right\}$
		& $\displaystyle O\left(\frac{B_f}{\sqrt n}\right)$
		& Lemma 22 in \citet{bartlett2002rademacher}. \\
		
		Binary decision tree class with $L$ nodes and $d$ real-valued features
		& $\displaystyle O\left(\sqrt{\frac{L\log(Ld)\log n}{n}}\right)$
		& Corollary 10 in \citet{leboeuf2020decision}. \\
		
		Neural network with the ReLU activation function, $W$ weights, and $L$ layers.
		& $\displaystyle O\left(
		\sqrt{\frac{WL\log W\log n}{n}}
		\right)$
		& Theorem 6 in \citet{bartlett2019nearly}; see also Theorem 5 in \citet{yin2019rademacher}, Theorem 14 in \citet{truong2022rademacher}, and Theorem 3.3 in \citet{bartlett2017spectrally}.  \\
		\bottomrule
	\end{tabularx}
\caption{Upper bounds on the Rademacher complexity $\mathrm{Rad}_n(\mathcal{F})$ for common function classes}
\label{tab:rad_complexity}
\end{table}

\section{Additional Implementation Details}
\label{app:imple_detail}

In this section, we document the practical considerations involved in implementing our proposed TLCQM framework and other experimental setups.

\subsection{Setups for Engression and Conditional Quantile Matching}

For the engression method described in \autoref{subsec:engress}, we employ a neural network model with two hidden layers, each consisting of 100 neurons. The latent noise dimension is set to 5, the learning rate to 0.001, and the model is trained for 1000 epochs unless otherwise specified.

For the conditional quantile matching estimator introduced in \autoref{subsec:quant_match}, we set the number of generative samples for each source domain to $M=3000$. The iterative algorithm for solving \eqref{quantile_ls} is initialized using the standard least-square estimator
$$\tilde{\bm{\beta}} \in \argmin_{\bm{\beta}\in \mathbb{R}^{K+1}} \sum_{i=1}^{n_0} \sum_{j=1}^M\left[Y_i^{(0)} - \bm{\beta}^T \hat{\bm{V}}_{ij} \right]^2$$
with the same input data. This initialization provides a reasonable starting point for numerical stability and accelerates convergence.

\subsection{Method for Density Ratio Estimation}

In Step 5 of Algorithm~\ref{algo:CQM}, we estimate the density ratio $w_k(x)=\frac{dP^{(0)}(x)}{dP^{(k)}(x)}$ for $k=1,...,K$ using an off-the-shelf approach called kernel mean matching \citep{gretton2009covariate,gretton2012kernel}. Let $n_0$ and $n_k,k=1,...,K$ denote the sample sizes of the target and source domains, respectively. For each source domain $k$, given a positive definite kernel function $G:\mathcal{X}\times \mathcal{X}\to \mathbb{R}$, we define a matrix $G^{(k)} \in \mathbb{R}^{n_k\times n_k}$ with entries $G_{ij}^{(k)}:= G(X_i^{(k)}, X_j^{(k)})$ and a vector $\kappa^{(k)}\in \mathbb{R}^{n_k}$ with entries $\kappa_i^{(k)}= \frac{n_k}{n_0} \sum_{j=1}^{n_0} G(X_i^{(k)}, X_j^{(0)})$. Then, the kernel mean matching estimates the density ratios $\zeta^{(k)} = \left(w_k(X_1^{(k)}),..., w_k(X_{n_k}^{(k)})\right)^T\in \mathbb{R}^{n_k}$ evaluated on the sample covariates in the $k$-th source domain by solving the following quadratic optimization problem as:
\begin{align*}
\min_{\zeta^{(k)}\in \mathbb{R}^{n_k}} \frac{1}{n_k^2} \zeta^{(k)T} G^{(k)} \zeta^{(k)} - \frac{2}{n_k} \zeta^{(k)T} \kappa^{(k)} \quad \text{ subject to } \zeta_i^{(k)} \in [0,B_{\zeta}] \text{ and } \left|\sum_{i=1}^{n_k} \zeta_i^{(k)} - n_k\right| \leq n_k\cdot \xi,
\end{align*}
where $\xi$ is often chosen to be $O\left(\frac{B_{\zeta}}{\sqrt{n_k}}\right)$.

\subsection{Hyperparameter Tuning for Standard Machine Learning Models and Comparative Methods}

When we apply the standard machine learning methods to datasets under various scenarios, their hyperparameters are tuned via 5-fold cross-validations across a combination of the following candidate choices:
\begin{itemize}
	\item {\bf XGBoost:} \texttt{learning\_rate}: $[0.001, 0.01, 0.1]$, \texttt{n\_estimators}: $[10, 50, 100]$, \texttt{max\_depth}: $[3, 5]$, \texttt{sub\_sample}: $[0.8, 1.0]$, \texttt{colsample\_bytree}: $[0.8, 1.0]$.
	
	\item {\bf Kernel Ridge Regression (KRR):} \texttt{alpha} (penalty parameter): $\left[3^{-2}, 3^{-1},...,3^6\right]\times \frac{0.1}{n}$, where $n$ is the sample size of the entire training set.
	
	\item {\bf Neural network:} \texttt{hidden\_layer\_sizes}: $[(10,), (50,), (100,)]$ and \texttt{alpha} (learning rate): $[0.0001, 0.001, 0.01]$ with the activation function as ``ReLU'', the optimizer as Adam \citep{adam2014method}, and the number of epochs as 1000.
\end{itemize}
For the transfer learning with kernel ridge regression (TKRR; \citealt{wang2023transfer}), we also apply 5-fold cross-validations for selecting the optimal penalty parameter \texttt{alpha} within the candidate set $\left[3^{-3}, 3^{-1},...,3^7\right]\times \frac{0.1}{n}$, where $n$ is the sample size of the entire training set. For deep transfer learning for conditional shift in regression (CDAR; \citealt{liu2021deep}), we follow the same setup in their paper, except that we replace the convolution architecture with a 4-layer fully-connected neural network with $d\times 64 \times 16 \times 8$ hidden neurons. Finally, for multi-domain adaptation for regression under conditional shift (DARC; \citealt{taghiyarrenani2023multi}), we set the embedding or latent feature dimension across domains as 8, whose feature extraction model is a 3-layer neural network with $(d+8)\times 100 \times 100$ hidden neurons.

\section{Proof of \autoref{thm:transfer_learn_excess_risk}}
\label{app:proof}

\begin{customthm}{3.4}
Under Assumptions~\ref{assump:basic_reg} and \ref{assump:quantile_match}, the transfer learning prediction function $\hat{f}^{(0,tl)}$ in \eqref{tranfer_learn_pred} has its excess risk satisfying that, with probability at least $1-\delta$,
\begin{align*}
	R(\hat{f}^{(0,tl)}) - R(f^{(0)}) &\lesssim \underbrace{\mathrm{Rad}_N(\mathcal{F}) + \sqrt{\frac{K\log(1/\delta)}{N}}}_{\textbf{Standard generalization error}} + \underbrace{\frac{1}{N} \sum_{k=1}^K \norm{\hat{w}_k-w_k}_1}_{\textbf{Importance weight error}} + \underbrace{\inf_{\bm{\beta}_*\in \mathcal{B}}\norm{\hat{\bm{\beta}} - \bm{\beta}_*}_1}_{\textbf{Quantile matching error}} \\
	&\quad + \underbrace{\sum_{k=1}^K \norm{\hat{g}^{(k)} - g^{(k)}}_{\infty}}_{\textbf{Distributional learning error}} + \underbrace{\inf_{\bm{\beta}_*\in \mathcal{B}} \sqrt{\int_0^1 \left[Q_{Y^{(0)}}(\alpha) - Q_{\bm{\beta}_*^T\bm{V}}(\alpha) \right]^2 d\alpha}}_{\textbf{Transfer bias}}
\end{align*}
when $N=\sum_{k=0}^K n_k\gg n_0$, where $\norm{\hat{w}_k - w_k}_1 = \sum_{i=1}^{n_k}\left|\hat{w}_k(X_i^{(k)}) - w_k(X_i^{(k)}) \right|$, $\mathcal{B}$ denotes the solution set of \eqref{quantile_ls_true}, and $\norm{\hat{g}^{(k)} - g^{(k)}}_{\infty} =\sup_{x\in \mathcal{X}} \mathbb{E}_{\eta}\left|\hat{g}^{(k)}(x,\eta) - g^{(k)}(x,\eta)\right|$ for $k=1,...,K$.
\end{customthm}

\begin{proof}[Proof of \autoref{thm:transfer_learn_excess_risk}]
Denote the objective function in \eqref{tranfer_learn_pred} by 
$$\hat{R}_N(f;\hat{\bm{w}},\hat{\bm{\beta}},\hat{\bm{g}}):= \frac{1}{N} \left\{\sum_{i=1}^{n_0} \ell\left(Y^{(0)}, f(X_i^{(0)})\right) + \sum_{k=1}^K \sum_{i=1}^{n_k} \hat{w}_k(X_i^{(k)}) \cdot \ell\left(\hat{\bm{\beta}}^T \hat{\bm{V}}_i^{(k)}, f(X_i^{(k)})\right)\right\}.$$
Given $\hat{f}^{(0,tl)}$ in \eqref{tranfer_learn_pred}, we can decompose an upper bound of its excess risk into three terms as follows:
\begin{align*}
	&R(\hat{f}^{(0,tl)}) - R(f^{(0)}) \\
	&\leq R(\hat{f}^{(0,tl)}) - \hat{R}_N(\hat{f}^{(0,tl)};\hat{\bm{w}},\hat{\bm{\beta}},\hat{\bm{g}}) + \hat{R}_N(f^{(0)};\hat{\bm{w}},\hat{\bm{\beta}},\hat{\bm{g}}) - R(f^{(0)}) \\
	&= R(\hat{f}^{(0,tl)}) - \hat{R}_N(\hat{f}^{(0,tl)};\bm{w},\hat{\bm{\beta}},\hat{\bm{g}}) + \hat{R}_N(f^{(0)};\bm{w},\hat{\bm{\beta}},\hat{\bm{g}}) - R(f^{(0)}) \\
	&\quad + \hat{R}_N(\hat{f}^{(0,tl)};\bm{w},\hat{\bm{\beta}},\hat{\bm{g}}) - \hat{R}_N(\hat{f}^{(0,tl)};\hat{\bm{w}},\hat{\bm{\beta}},\hat{\bm{g}}) + \hat{R}_N(f^{(0)};\hat{\bm{w}},\hat{\bm{\beta}},\hat{\bm{g}}) - \hat{R}_N(f^{(0)};\bm{w},\hat{\bm{\beta}},\hat{\bm{g}}) \\
	&= \mathbb{E}\left[\ell\left(Y^{(0)}, \hat{f}^{(0,tl)}(X^{(0)})\right) \right] - \frac{1}{N} \left\{\sum_{i=1}^{n_0} \ell\left(Y_i^{(0)}, \hat{f}^{(0,tl)}(X_i^{(0)})\right) + \sum_{k=1}^K \sum_{i=1}^{n_k} w_k(X_i^{(k)}) \cdot \ell\left(Y_{ki}^{(0)}, \hat{f}^{(0,tl)}(X_i^{(k)})\right)\right\} \\
	&\quad + \frac{1}{N} \left\{\sum_{i=1}^{n_0} \ell\left(Y_i^{(0)}, f^{(0)}(X_i^{(0)})\right) + \sum_{k=1}^K \sum_{i=1}^{n_k} w_k(X_i^{(k)}) \cdot \ell\left(Y_{ki}^{(0)}, f^{(0)}(X_i^{(k)})\right)\right\} - \mathbb{E}\left[\ell\left(Y^{(0)}, f^{(0)}(X^{(0)})\right) \right]\\
	&\quad + \hat{R}_N(\hat{f}^{(0,tl)};\bm{w},\hat{\bm{\beta}},\hat{\bm{g}}) - \hat{R}_N(\hat{f}^{(0,tl)};\hat{\bm{w}},\hat{\bm{\beta}},\hat{\bm{g}}) + \hat{R}_N(f^{(0)};\hat{\bm{w}},\hat{\bm{\beta}},\hat{\bm{g}}) - \hat{R}_N(f^{(0)};\bm{w},\hat{\bm{\beta}},\hat{\bm{g}})\\
	&\quad + \frac{1}{N} \sum_{k=1}^K \sum_{i=1}^{n_k}w_k(X_i^{(k)}) \left[ \ell\left(Y_{ki}^{(0)}, \hat{f}^{(0,tl)}(X_i^{(k)})\right) - \ell\left(\hat{\bm{\beta}}^T\hat{\bm{V}}_i^{(k)}, \hat{f}^{(0,tl)}(X_i^{(k)})\right)\right] \\
	&\quad + \frac{1}{N} \sum_{k=1}^K \sum_{i=1}^{n_k} w_k(X_i^{(k)}) \left[\ell\left(\hat{\bm{\beta}}^T\hat{\bm{V}}_i^{(k)}, f^{(0)}(X_i^{(k)})\right) - \ell\left(Y_{ki}^{(0)}, f^{(0)}(X_i^{(k)})\right) \right] \\
	&\leq \underbrace{2\sup_{f\in \mathcal{F}} \left|\hat{R}_N(f) -R(f) \right|}_{\textbf{Term I}} + \underbrace{\hat{R}_N(\hat{f}^{(0,tl)};\bm{w},\hat{\bm{\beta}},\hat{\bm{g}}) - \hat{R}_N(\hat{f}^{(0,tl)};\hat{\bm{w}},\hat{\bm{\beta}},\hat{\bm{g}}) + \hat{R}_N(f^{(0)};\hat{\bm{w}},\hat{\bm{\beta}},\hat{\bm{g}}) - \hat{R}_N(f^{(0)};\bm{w},\hat{\bm{\beta}},\hat{\bm{g}})}_{\textbf{Term II}}\\
	&\quad + \underbrace{\frac{1}{N} \sum_{k=1}^K \sum_{i=1}^{n_k}w_k(X_i^{(k)}) \left[\ell\left(Y_{ki}^{(0)}, \hat{f}^{(0,tl)}(X_i^{(k)})\right) - \ell\left(\bm{\beta}_*^T\bm{V}_i^{(k)}, \hat{f}^{(0,tl)}(X_i^{(k)})\right)\right]}_{\textbf{Term III}} \\
	&\quad + \underbrace{\frac{1}{N} \sum_{k=1}^K \sum_{i=1}^{n_k}w_k(X_i^{(k)}) \left[\ell\left(\bm{\beta}_*^T\bm{V}_i^{(k)}, f^{(0)}(X_i^{(k)})\right) - \ell\left(Y_{ki}^{(0)}, f^{(0)}(X_i^{(k)})\right)\right]}_{\textbf{Term III}}\\
	&\quad + \underbrace{\frac{1}{N} \sum_{k=1}^K \sum_{i=1}^{n_k} w_k(X_i^{(k)}) \left[\ell\left(\bm{\beta}_*^T\bm{V}_i^{(k)}, \hat{f}^{(0,tl)}(X_i^{(k)})\right) - \ell\left(\hat{\bm{\beta}}^T\hat{\bm{V}}_i^{(k)}, \hat{f}^{(0,tl)}(X_i^{(k)})\right) \right]}_{\textbf{Term IV}} \\
	&\quad + \underbrace{\frac{1}{N} \sum_{k=1}^K \sum_{i=1}^{n_k} w_k(X_i^{(k)}) \left[\ell\left(\hat{\bm{\beta}}^T\hat{\bm{V}}_i^{(k)}, f^{(0)}(X_i^{(k)})\right) - \ell\left(\bm{\beta}_*^T\bm{V}_i^{(k)}, f^{(0)}(X_i^{(k)})\right) \right]}_{\textbf{Term IV}},
\end{align*}
where $Y_{ki}^{(0)}, i=1,...,n_k$ are  random samples from the conditional distribution $P^{(0)}(Y|X=X_i^{(k)}) = g^{(0)}(X_i^{(k)},\eta_i^{(k)})$ for some independent noise vector $\eta_i^{(k)}$ and $\bm{V}_i^{(k)} = \left(1,Y_i^{(1,k)},...,Y_i^{(K,k)} \right)^T \in \mathbb{R}^{K+1}$ with $Y_i^{(j,k)}=g^{(j)}(X_i^{(k)}, \eta_{ij}^{(k)})$ and some independent noise vector $\eta_{ij}^{(k)}$ for $j=1,...,K$. Additionally, $\bm{\beta}_*$ is the projection of $\hat{\bm{\beta}}$ to the solution set $\mathcal{B}=\argmin\limits_{\bm{\beta}\in \mathbb{R}^{K+1}} \int_0^1 \left[Q_{Y^{(0)}}(\alpha) - Q_{\bm{\beta}^T\bm{V}}(\alpha) \right]^2 d\alpha$ in \eqref{quantile_ls_true}.\\

\noindent $\bullet$ \textbf{Term I:} Similar to Proposition~\ref{prop:target_only_excess}, we know that
$$2\sup_{f\in \mathcal{F}} \left|\hat{R}_N(f) -R(f) \right| \leq 4B_{\ell} \cdot \mathrm{Rad}_N(\mathcal{F}) + 2B_{\ell} \sqrt{\frac{2\log(2/\delta)}{N}}$$
with probability at least $1-\delta$.\\

\noindent $\bullet$ \textbf{Term II:} We compute that
\begin{align*}
	\textbf{Term II} &= \frac{1}{N}\sum_{k=1}^K \sum_{i=1}^{n_k} \left[\hat{w}_k(X_i^{(k)}) - w_k(X_i^{(k)}) \right]\ell\left(\hat{\bm{\beta}}^T \hat{\bm{V}}_i^{(k)}, \hat{f}^{(0,tl)}(X_i^{(k)})\right) \\
	&\quad + \frac{1}{N}\sum_{k=1}^K \sum_{i=1}^{n_k} \left[\hat{w}_k(X_i^{(k)}) - w_k(X_i^{(k)}) \right]\ell\left(\hat{\bm{\beta}}^T \hat{\bm{V}}_i^{(k)}, f^{(0)}(X_i^{(k)})\right) \\
	&\leq \frac{2B_{\ell}}{N} \sum_{k=1}^K \norm{\hat{w}_k-w_k}_1,
\end{align*}
where the last inequality follows from the boundedness of the loss function under Assumption~\ref{assump:basic_reg}(a) and $\norm{\hat{w}_k - w_k}_1 = \sum_{i=1}^{n_k}\left|\hat{w}_k(X_i^{(k)}) - w_k(X_i^{(k)}) \right|$ for $k=1,...,K$.\\

\noindent $\bullet$ \textbf{Term III:} Define $\hat{D}_{N-n_0}(f) := \frac{1}{N-n_0}\sum_{k=1}^K \sum_{i=1}^{n_k}w_k(X_i^{(k)}) \left[\ell\left(Y_{ki}^{(0)}, f(X_i^{(k)})\right) - \ell\left(\bm{\beta}_*^T\bm{V}_i^{(k)}, f(X_i^{(k)})\right)\right]$ and 
$$D(f) := \mathbb{E}\left[\hat{D}_{N-n_0}(f)\right] = \mathbb{E}_{X\sim P^{(0)}}\left[\ell\left(Y^{(0)}, f(X)\right) - \ell\left(\bm{\beta}_*^T\bm{V}^{(0)}, f(X)\right) \right],$$
where $\bm{V}^{(0)} = \left(1, Y^{(1,0)},...,Y^{(K,0)} \right)^T \in \mathbb{R}^{K+1}$ with $Y^{(j,0)} = g^{(j)}(X^{(0)},\eta_j^{(0)})$.	Then, we can upper bound \textbf{Term III} as:
\begin{align*}
	\textbf{Term III} &= \frac{N-n_0}{N} \left[\hat{D}_{N-n_0}(\hat{f}^{(0,tl)}) - \hat{D}_{N-n_0}(f^{(0)})\right] \\
	&\leq \frac{2(N-n_0)}{N} \sup_{f\in \mathcal{F}} \left|\hat{D}_{N-n_0}(f) - D(f) \right| \\
	&\quad + \frac{1}{N} \sum_{k=1}^K  n_k\cdot \mathbb{E}_{X\sim P^{(0)}}\left[\ell\left(Y^{(0)}, \hat{f}^{(0,tl)}(X)\right) - \ell\left(\bm{\beta}_*^T\bm{V}^{(0)}, \hat{f}^{(0,tl)}(X)\right) \right] \\
	&\quad + \frac{1}{N} \sum_{k=1}^K  n_k\cdot \mathbb{E}_{X\sim P^{(0)}}\left[\ell\left(Y^{(0)}, f^{(0)}(X)\right) - \ell\left(\bm{\beta}_*^T\bm{V}^{(0)}, f^{(0)}(X)\right) \right]\\
	&\stackrel{\text{(i)}}{\leq} \frac{4B_{\ell}(N-n_0)}{N} \cdot \mathrm{Rad}_{N-n_0}(\mathcal{F}) + 2B_{\ell}\cdot \frac{\sqrt{2(N-n_0)\log(2/\delta)}}{N} \\
	&\quad + \frac{2}{N} \sum_{k=1}^K n_k B_{\ell} \cdot \mathbb{E}_{X\sim P^{(0)}}\left[\inf_{\gamma}\int_{\mathcal{Y}\times \mathcal{Y}} |y_p -y_q|\, d\gamma(y_p,y_q) \right] \\
	&= \frac{4B_{\ell}(N-n_0)}{N} \cdot \mathrm{Rad}_{N-n_0}(\mathcal{F}) + 2B_{\ell}\cdot \frac{\sqrt{2(N-n_0)\log(2/\delta)}}{N}\\
	&\quad + \frac{2(N-n_0) B_{\ell}}{N} \cdot \mathbb{E}_{X^{(0)}\sim P^{(0)}}\left[W_1\left(P^{(0)}(Y^{(0)}|X^{(0)}), P(\bm{\beta}_*^T\bm{V}^{(0)} |X^{(0)})\right)\right] \\
	&\stackrel{\text{(ii)}}{\leq} \frac{4B_{\ell}(N-n_0)}{N} \cdot \mathrm{Rad}_{N-n_0}(\mathcal{F}) + 2B_{\ell} \sqrt{\frac{2\log(2/\delta)}{N}} \\
	&\quad + \frac{4(N-n_0) B_{\ell} B_g}{N} \sqrt{\int_0^1 \left[Q_{Y^{(0)}}(\alpha) - Q_{\bm{\beta}_*^T\bm{V}}(\alpha) \right]^2 d\alpha}
\end{align*}
with probability at least $1-\delta$, where (i) follows from Proposition~\ref{prop:target_only_excess} and the Kantorovich-Rubinstein duality with $\gamma$ being a coupling between the conditional distributions $P^{(0)}(y|X)$ and $P(\bm{\beta}_*^T\bm{V}^{(0)} |X)$, as well as (ii) leverages the boundedness of $Y^{(k)},k=0,1,...,K$ in Assumption~\ref{assump:quantile_match}(a) with Cauchy-Schwarz inequality. Here, $W_1\left(P(Y|X), P(Z |X)\right)$ is the Wasserstein-1 distance between the conditional distributions $P(Y|X), P(Z |X)$, and the Wasserstein-2 distance in (ii) above satisfies
$$\mathbb{E}_{X^{(0)}\sim P^{(0)}}\left[W_2\left(P^{(0)}(Y^{(0)}|X^{(0)}), P(\bm{\beta}_*^T\bm{V}^{(0)} |X^{(0)})\right)\right] = \sqrt{\int_0^1 \left[Q_{Y^{(0)}}(\alpha) - Q_{\bm{\beta}_*^T\bm{V}}(\alpha) \right]^2 d\alpha}.$$
As a result, we derive that
\begin{align*}
	\textbf{Term III} &\leq \frac{4B_{\ell}(N-n_0)}{N} \cdot \mathrm{Rad}_{N-n_0}(\mathcal{F}) + 2B_{\ell} \sqrt{\frac{2\log(2/\delta)}{N}} \\
	&\quad + \frac{4(N-n_0) B_{\ell} B_g}{N} \sqrt{\int_0^1 \left[Q_{Y^{(0)}}(\alpha) - Q_{\bm{\beta}^T\bm{V}}(\alpha) \right]^2 d\alpha}
\end{align*}
with probability at least $1-\delta$. 

Alternatively, we can also bound \textbf{Term III} as:
\begin{align*}
\textbf{Term III} &= \frac{1}{N} \sum_{k=1}^K \sum_{i=1}^{n_k}w_k(X_i^{(k)}) \left[\ell\left(Y_{ki}^{(0)}, \hat{f}^{(0,tl)}(X_i^{(k)})\right) - \ell\left(\bm{\beta}_*^T\bm{V}_i^{(k)}, \hat{f}^{(0,tl)}(X_i^{(k)})\right)\right] \\
&\quad + \frac{1}{N} \sum_{k=1}^K \sum_{i=1}^{n_k}w_k(X_i^{(k)}) \left[\ell\left(\bm{\beta}_*^T\bm{V}_i^{(k)}, f^{(0)}(X_i^{(k)})\right) - \ell\left(Y_{ki}^{(0)}, f^{(0)}(X_i^{(k)})\right)\right] \\
&\leq \frac{2B_{\ell}B_w}{N} \sum_{k=1}^K \sum_{i=1}^{n_k}\left|\bm{\beta}_*^T\bm{V}_i^{(k)} - Y_{ki}^{(0)}\right| \\
&= \frac{2B_{\ell}B_w}{N} \sum_{k=1}^K \sum_{i=1}^{n_k}\left[\left|\bm{\beta}_*^T\bm{V}_i^{(k)} - Y_{ki}^{(0)}\right| - \mathbb{E}\left|\bm{\beta}_*^T\bm{V}_i^{(k)} - Y_{ki}^{(0)}\right|\right] + \frac{2B_{\ell}B_w}{N} \sum_{k=1}^K \sum_{i=1}^{n_k} \mathbb{E}\left|\bm{\beta}_*^T\bm{V}_i^{(k)} - Y_{ki}^{(0)}\right| \\
&\stackrel{\text{(iii)}}{\leq} \frac{4B_{\ell}B_w B_{\beta} B_g (N-n_0) (K+1)}{N} \sqrt{\frac{\log(2/\delta)}{N-n_0}} + \frac{2B_{\ell} B_w}{N} \sum_{k=1}^K n_k\cdot \inf_{\bm{\beta}\in \mathcal{B}} \mathbb{E}\left|\bm{\beta}^T \bm{V}^{(k)} - Y^{(0,k)}\right|\\
&\leq 4B_{\ell}B_w B_{\beta} B_g (K+1) \sqrt{\frac{\log(2/\delta)}{N}} \\
&\quad + \frac{2B_{\ell} B_w}{N} \sum_{k=1}^K n_k\cdot \inf_{\bm{\beta}\in \mathcal{B}} \mathbb{E}_{(X,\eta)\sim P^{(k)}\times P_{\eta}}\left|g^{(0)}(X,\eta) -\beta_0-\sum_{k=1}^K \beta_k\cdot g^{(k)}(X,\eta)\right|
\end{align*}
with probability at least $1-\delta$, where the inequality (iii) follows from Hoeffding's inequality with 
$$\bm{\beta}_*^T\bm{V}_i^{(k)} = \beta_{*0} + \sum_{j=1}^K \beta_{*j}\cdot  g^{(j)}(X_i^{(k)}, \eta_i^{(k)}) \leq B_{\beta} + KB_{\beta} B_g \leq (K+1) B_{\beta} B_g$$
when $B_g >1$. In this case, the ``transfer bias'' term is expressed as
\begin{equation}
\label{transfer_bias2}
\sum_{k=1}^K n_k\cdot \inf_{\bm{\beta}\in \mathcal{B}} \mathbb{E}_{(X,\eta)\sim P^{(k)}\times P_{\eta}}\left|g^{(0)}(X,\eta) -\beta_0-\sum_{k=1}^K \beta_k\cdot g^{(k)}(X,\eta)\right|,
\end{equation}
whose magnitude relies on how well the target distribution $g^{(0)}(X,\eta)$ or $P^{(0)}(X,Y)$ can be approximated by the convex hull of $g^{(k)}(X,\eta), k=1,...,K$ or $P^{(k)}(X,Y), k=1,...,K$ with intercept in source domains under the covariate distribution in each source domain.
\\

\noindent $\bullet$ {\bf Term IV:}
By Assumption~\ref{assump:basic_reg}, we know that
\begin{align*}
	\textbf{Term IV} &\leq \frac{2B_{\ell}B_w}{N} \sum_{k=1}^K \sum_{i=1}^{n_k} \left|\hat{\bm{\beta}}^T \hat{\bm{V}}_i^{(k)} - \bm{\beta}_*^T\bm{V}_i^{(k)} \right| \\
	&\leq \underbrace{\frac{2B_{\ell}B_w}{N} \sum_{k=1}^K \sum_{i=1}^{n_k}\left|\left(\hat{\bm{\beta}} - \bm{\beta}_*\right)^T \bm{V}_i^{(k)} \right|}_{\textbf{Term IVa}} + \underbrace{\frac{2B_{\ell}B_w}{N} \sum_{k=1}^K \sum_{i=1}^{n_k}\left|\bm{\beta}_*^T\left(\hat{\bm{V}}_i^{(k)} - \bm{V}_i^{(k)}\right)\right|}_{\textbf{Term IVb}} \\
	&\quad + \underbrace{\frac{2B_{\ell}B_w}{N} \sum_{k=1}^K \sum_{i=1}^{n_k}\left|\left(\hat{\bm{\beta}} - \bm{\beta}_*\right)^T \left(\hat{\bm{V}}_i^{(k)} - \bm{V}_i^{(k)}\right) \right|}_{\textbf{Term IVc}},
\end{align*}
where we recall that $\hat{\bm{V}}_i^{(k)} = \left(1,\hat{Y}_i^{(1,k)},...,\hat{Y}_i^{(K,k)} \right)^T \in \mathbb{R}^{K+1}$ with $\hat{Y}_i^{(j,k)}=\frac{1}{M} \sum_{m=1}^M \hat{g}^{(j)}(X_i^{(k)}, \eta_{im}^{(k)})$ and $\bm{V}_i^{(k)} = \left(1,Y_i^{(1,k)},...,Y_i^{(K,k)} \right)^T \in \mathbb{R}^{K+1}$ with $Y_i^{(j,k)}=g^{(j)}(X_i^{(k)}, \eta_{ij}^{(k)})$ for $j=1,...,K$.

Note that \textbf{Term IVc} will be dominated by the maximum of \textbf{Term IVa} and \textbf{Term IVb} when they are small, so we will focus on the first two terms.

For \textbf{Term IVa}, we have that
$$\frac{2B_{\ell}B_w}{N} \sum_{k=1}^K \sum_{i=1}^{n_k}\left|\left(\hat{\bm{\beta}} - \bm{\beta}_*\right)^T \bm{V}_i^{(k)} \right|\leq \frac{2B_{\ell}B_w B_g(N-n_0)}{N}\norm{\hat{\bm{\beta}} - \bm{\beta}_*}_1$$
under Assumption~\ref{assump:quantile_match}(a).

For \textbf{Term IVb}, we also have that
\begin{align*}
	&\frac{2B_{\ell}B_w}{N} \sum_{k=1}^K \sum_{i=1}^{n_k}\left|\bm{\beta}_*^T\left(\hat{\bm{V}}_i^{(k)} - \bm{V}_i^{(k)}\right)\right| \\
	&\leq \frac{2B_{\ell}B_w}{N} \sum_{k=1}^K \sum_{i=1}^{n_k}\left[\left|\bm{\beta}_*^T\left(\hat{\bm{V}}_i^{(k)} - \bm{V}_i^{(k)}\right)\right| - \mathbb{E}_{\eta}\left|\bm{\beta}_*^T\left(\hat{\bm{V}}_i^{(k)} - \bm{V}_i^{(k)}\right)\right| + \mathbb{E}_{\eta}\left|\bm{\beta}_*^T\left(\hat{\bm{V}}_i^{(k)} - \bm{V}_i^{(k)}\right)\right|\right]\\
	&\stackrel{\text{(iv)}}{\leq} \frac{2B_{\ell}B_w (N-n_0)}{N}\sqrt{\frac{2KB_gB_{\beta}\log(2/\delta)}{N-n_0}}  + \frac{2B_{\ell}B_w B_{\beta} (N-n_0)}{N} \sum_{k=1}^K \norm{\hat{g}^{(k)} - g^{(k)}}_{\infty} \\
	&\leq 2B_{\ell}B_w \sqrt{\frac{2KB_gB_{\beta}\log(2/\delta)}{N}}  + \frac{2B_{\ell}B_w B_{\beta} (N-n_0)}{N} \sum_{k=1}^K \norm{\hat{g}^{(k)} - g^{(k)}}_{\infty},
\end{align*}
with probability at least $1-\delta$, where (iv) follows from the Hoeffding's inequality under Assumption~\ref{assump:quantile_match}. Here, $\norm{\hat{g}^{(k)} - g^{(k)}}_{\infty} =\sup_{x\in \mathcal{X}} \mathbb{E}_{\eta}\left|\hat{g}^{(k)}(x,\eta) - g^{(k)}(x,\eta)\right|$ is the expected $L_{\infty}$ prediction error over the noise vector.
As a result, we derive that
\begin{align*}
	\textbf{Term IV} &\leq \frac{2B_{\ell}B_w B_g(N-n_0)}{N}\norm{\hat{\bm{\beta}} - \bm{\beta}_*}_1 + 2B_{\ell}B_w \sqrt{\frac{2KB_gB_{\beta}\log(2/\delta)}{N}} \\
	&\quad  + \frac{2B_{\ell}B_w B_{\beta} (N-n_0)}{N} \sum_{k=1}^K \norm{\hat{g}^{(k)} - g^{(k)}}_{\infty}
\end{align*}
with probability at least $1-\delta$.\\

Combining all the results in \textbf{Terms I, II, III, IV}, we conclude that with probability at least $1-\delta$,
\begin{align*}
	&R(\hat{f}^{(0,tl)}) - R(f^{(0)}) \\
	&\leq \mathrm{Rad}_N(\mathcal{F}) + \sqrt{\frac{\log(1/\delta)}{N}} + \frac{1}{N} \sum_{k=1}^K \norm{\hat{w}_k-w_k}_1 + \frac{4B_{\ell}(N-n_0)}{N} \cdot \mathrm{Rad}_{N-n_0}(\mathcal{F}) + 2B_{\ell} \sqrt{\frac{2\log(2/\delta)}{N}} \\
	&\quad + \frac{4(N-n_0) B_{\ell} B_g}{N} \sqrt{\int_0^1 \left[Q_{Y^{(0)}}(\alpha) - Q_{\bm{\beta}_*^T\bm{V}}(\alpha) \right]^2 d\alpha} + \frac{2B_{\ell}B_w B_g(N-n_0)}{N}\norm{\hat{\bm{\beta}} - \bm{\beta}_*}_1 \\
	&\quad + 2B_{\ell}B_w \sqrt{\frac{2KB_gB_{\beta}\log(2/\delta)}{N}} + \frac{2B_{\ell}B_w B_{\beta} (N-n_0)}{N} \sum_{k=1}^K \norm{\hat{g}^{(k)} - g^{(k)}}_{\infty}\\
	&\stackrel{\text{(v)}}{\lesssim} \mathrm{Rad}_N(\mathcal{F}) + \sqrt{\frac{K \log(1/\delta)}{N}} + \frac{1}{N} \sum_{k=1}^K \norm{\hat{w}_k-w_k}_1 + \sqrt{\int_0^1 \left[Q_{Y^{(0)}}(\alpha) - Q_{\bm{\beta}_*^T\bm{V}}(\alpha) \right]^2 d\alpha} \\
	&\quad + \inf_{\bm{\beta}_*\in \mathcal{B}}\norm{\hat{\bm{\beta}} - \bm{\beta}_*}_1 + \sum_{k=1}^K \norm{\hat{g}^{(k)} - g^{(k)}}_{\infty},
\end{align*}
where (v) follows from the condition that $N=\sum_{k=0}^K n_k \gg n_0$.
\end{proof}

\section{Proof of \autoref{thm:quantile_rate}}
\label{app:qr_proof}

Before proving \autoref{thm:quantile_rate}, we first introduce several notations and supporting lemmas. For any random variable $Z$, we denote the quantile function corresponding to its empirical distribution of $\left\{Z_1,...,Z_n\right\}$ by $\alpha \mapsto Q_{n,Z}(\alpha)$, \emph{i.e.},
$$Q_{n,Z}(\alpha)=\inf\left\{z\in \mathbb{R}: F_{n,Z}(z)\geq \alpha\right\}$$
with $\alpha\in (0,1)$ and $F_{n,Z}(z)=\frac{1}{n}\sum_{i=1}^n \mathds{1}(Z_i\leq z)$, where $\mathds{1}(A)$ is an indicator function of the event $A$. Then, ignoring the Monte Carlo approximation in \eqref{quantile_ls}, we denote
\begin{align}
\label{quantile_obj}
\begin{split}
\hat{S}_{n_0}(\bm{\beta}) &= \frac{1}{n_0}\sum_{i=1}^{n_0} \left[Y_{(i)}^{(0)} - \left(\bm{\beta}^T \hat{\bm{V}}\right)_{(i)} \right]^2 = \frac{1}{n_0}\sum_{j=1}^{n_0} \left[Q_{n_0,Y^{(0)}}\left(\frac{j}{n_0}\right) - Q_{n_0,\bm{\beta}^T \hat{\bm{V}}}\left(\frac{j}{n_0}\right)\right]^2, \\
S_{n_0}(\bm{\beta}) &= \frac{1}{n_0}\sum_{i=1}^{n_0} \left[Y_{(i)}^{(0)} - \left(\bm{\beta}^T \bm{V}\right)_{(i)} \right]^2 = \frac{1}{n_0}\sum_{j=1}^{n_0} \left[Q_{n_0,Y^{(0)}}\left(\frac{j}{n_0}\right) - Q_{n_0,\bm{\beta}^T \bm{V}}\left(\frac{j}{n_0}\right)\right]^2.
\end{split}
\end{align}

\begin{lemma}
	\label{lem:kiefer_bound}
	Under Assumption~\ref{assump:quantile_rate}, it holds that
	\begin{align*}
		\sup_{\alpha\in [0,1]}\left|\sqrt{n} \cdot p_{\bm{\beta}^T\bm{V}}(Q_{\bm{\beta}^T\bm{V}}(\alpha))\left[Q_{n,\bm{\beta}^T\bm{V}}(\alpha) - Q_{\bm{\beta}^T\bm{V}}(\alpha) \right] + \sqrt{n}\left[F_{n,\bm{\beta}^T\bm{V}}(Q_{\bm{\beta}^T\bm{V}}(\alpha)) -\alpha\right]\right| &= O_P\left(\frac{(\log n)^{\frac{1}{2}} (\log\log n)^{\frac{1}{4}}}{n^{\frac{1}{4}}} \right),\\
		\sup_{\alpha\in [0,1]}\left|\sqrt{n} \cdot p_{Y^{(0)}}(Q_{Y^{(0)}}(\alpha))\left[Q_{n,Y^{(0)}}(\alpha) - Q_{Y^{(0)}}(\alpha) \right] + \sqrt{n}\left[F_{n,Y^{(0)}}(Q_{Y^{(0)}}(\alpha)) -\alpha\right]\right| &= O_P\left(\frac{(\log n)^{\frac{1}{2}} (\log\log n)^{\frac{1}{4}}}{n^{\frac{1}{4}}} \right).
	\end{align*}
\end{lemma}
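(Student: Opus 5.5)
This is the uniform Bahadur--Kiefer representation for the empirical quantile process. The plan is to reduce it to the classical result of \citet{kiefer1970deviations}, as refined in \citet{kulik2007bahadur}, via the quantile transform. Fix a random variable $Z$, which will be either $\bm{\beta}^T\bm{V}$ for a fixed $\bm{\beta}$ in the ball of radius $R_{\beta}$ around some $\bm{\beta}_*\in\mathcal{B}$, or $Y^{(0)}$. Write $F$, $p$, $Q$ for its distribution, density and quantile function, and let $U_i=F(Z_i)$. Under Assumption~\ref{assump:quantile_rate}(a,b), $F$ is continuous and strictly increasing on the support. Hence the $U_i$ are i.i.d.\ uniform on $(0,1)$, $Z_i=Q(U_i)$ almost surely, and $Q_{n,Z}=Q\circ U_n^{-1}$, where $U_n^{-1}$ is the uniform empirical quantile function. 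Likewise $F_{n,Z}(Q(\alpha))=\Gamma_n(\alpha)$, the uniform empirical distribution function at $\alpha$.

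\textbf{Step 1 (uniform case).} I would invoke Kiefer's theorem for the uniform Bahadur--Kiefer process:
$$\sup_{\alpha}\bigl|\sqrt{n}\,[U_n^{-1}(\alpha)-\alpha]+\sqrt{n}\,[\Gamma_n(\alpha)-\alpha]\bigr|=O\bigl(n^{-1/4}(\log n)^{1/2}(\log\log n)^{1/4}\bigr)$$
almost surely, and hence in probability. I would cite this result rather than reprove it.

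\textbf{Step 2 (Taylor transfer).} Expand $Q(U_n^{-1}(\alpha))-Q(\alpha)=Q'(\alpha)(U_n^{-1}(\alpha)-\alpha)+\tfrac12 Q''(\xi)(U_n^{-1}(\alpha)-\alpha)^2$, with $Q'=1/p(Q)$ and $Q''=-p'(Q)/p(Q)^3$. Multiplying by $\sqrt{n}\,p(Q(\alpha))$ gives
$$\sqrt{n}\,p(Q(\alpha))[Q_{n,Z}(\alpha)-Q(\alpha)]=\sqrt{n}\,[U_n^{-1}(\alpha)-\alpha]+R_n(\alpha),$$
where $|R_n(\alpha)|\le C\sqrt{n}\,(U_n^{-1}(\alpha)-\alpha)^2$. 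Here $C$ depends on $\sup|p'(Q)|/\inf p(Q)^2\cdot\sup p(Q)$, which is finite by the Kiefer-type conditions in Assumption~\ref{assump:quantile_rate}(b). By the Chung--Smirnov LIL, $\sup_\alpha|U_n^{-1}(\alpha)-\alpha|=O(\sqrt{\log\log n/n})$ almost surely, so $\sup_\alpha|R_n(\alpha)|=O(\log\log n/\sqrt n)$. This is of smaller order than the Step 1 rate. Combining the two steps by the triangle inequality yields both displayed bounds.

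\textbf{Main obstacle.} The delicate point is that the Taylor step needs the density-ratio control uniformly, including near $\alpha\in\{0,1\}$. The intermediate point $\xi$ lies between $\alpha$ and $U_n^{-1}(\alpha)$, so the constant $\sup_\xi|p'(Q(\xi))|\,p(Q(\alpha))/p(Q(\xi))^3$ must be bounded. Assumption~\ref{assump:quantile_rate}(b) supplies exactly this through a positive lower bound on $p(Q)$ and an upper bound on $|p'(Q)|$ over all of $[0,1]$; this is what allows working with the full supremum over $\alpha\in[0,1]$ rather than a trimmed range $[\epsilon_n,1-\epsilon_n]$. One caveat: as written, the assumption for $Y^{(0)}$ places a $\sup$ on the density where a positive $\inf$ is needed; I would read it as $\inf_\alpha p_{Y^{(0)}}(Q_{Y^{(0)}}(\alpha))>0$.

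A last concern is uniformity in $\bm{\beta}$. The lemma is stated pointwise in $\bm{\beta}$, so the constants may depend on $\bm{\beta}$, and this is all that is needed here. If uniformity over the compact ball is needed later, the same argument goes through because the constants in Assumption~\ref{assump:quantile_rate}(b) are uniform there and the uniform-case bound in Step 1 does not depend on $\bm{\beta}$.
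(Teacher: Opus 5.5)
Your proposal is correct and takes essentially the paper's route. The paper simply defers to the classical Bahadur--Kiefer results of Kiefer and Kulik; you cite Kiefer's uniform-case theorem and spell out the standard reduction to general $F$ via the quantile transform and a second-order Taylor expansion, with the Chung--Smirnov LIL bounding the quadratic remainder by $O(\log\log n/\sqrt{n})$. You are also right that the $Y^{(0)}$ condition in Assumption~\ref{assump:quantile_rate}(b) must be read as $\inf_{\alpha}p_{Y^{(0)}}(Q_{Y^{(0)}}(\alpha))>0$.

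One caution on your closing aside about uniformity in $\bm{\beta}$. The uniform-case remainder having the same law for every $\bm{\beta}$ does not control its supremum over $\bm{\beta}$, which the proof of Theorem~\ref{thm:quantile_rate} implicitly uses. Uniformity over the ball would need a separate argument (e.g.\ exploiting the VC class of half-spaces), though this does not affect the pointwise lemma as stated.
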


This is a classical result established by \citet{kiefer1970deviations} for i.i.d. samples and \citet{kulik2007bahadur} for some weakly dependent quantile processes, and we thus refer readers to these two classical papers for its proof.\\

\begin{lemma}
	\label{lem:quantile_deriv}
	Under Assumption~\ref{assump:quantile_rate}(a,b), it holds that
	\begin{align*}
		\nabla_{\bm{\beta}} Q_{\bm{\beta}^T\bm{V}}(\alpha) &= \mathbb{E}\left[\bm{V} \big| \bm{\beta}^T\bm{V}=Q_{\bm{\beta}^T\bm{V}}(\alpha) \right],\\
		\nabla_{\bm{\beta}}^2 Q_{\bm{\beta}^T\bm{V}}(\alpha) &= \mathrm{Cov}\left[
		\hat{\bm V}\hat{\bm V}^T\big|\bm{\beta}^T\hat{\bm V}=Q_{\bm{\beta}^T\bm{V}}(\alpha)\right] - \frac{2p_{\bm{\beta}^T\bm{V}}'(Q_{\bm{\beta}^T\bm{V}}(\alpha))}{p_{\bm{\beta}^T\bm{V}}(Q_{\bm{\beta}^T\bm{V}}(\alpha))}\cdot \mathbb{E}\left[\bm{V} \big| \bm{\beta}^T\bm{V}=Q_{\bm{\beta}^T\bm{V}}(\alpha) \right] \mathbb{E}\left[\bm{V} \big| \bm{\beta}^T\bm{V}=Q_{\bm{\beta}^T\bm{V}}(\alpha) \right]^T,\\
		\nabla_{\bm{\beta}}^2 S(\bm{\beta}) &= 2\int_0^1 \left\{\left[\nabla_{\bm{\beta}}Q_{\bm{\beta}^T\bm{V}}(\alpha)\right] \left[\nabla_{\bm{\beta}} Q_{\bm{\beta}^T\bm{V}}(\alpha)\right]^T - \left[Q_{Y^{(0)}}(\alpha) - Q_{\bm{\beta}^T\bm{V}}(\alpha)\right]\nabla_{\bm{\beta}}^2 Q_{\bm{\beta}^T\bm{V}}(\alpha) \right\}^2 d\alpha,
	\end{align*}
	where $S(\bm{\beta}) = \int_0^1 \left[Q_{Y^{(0)}}(\alpha) - Q_{\bm{\beta}^T\bm{V}}(\alpha) \right]^2 d\alpha$ as in \eqref{quantile_ls_true} and $\bm{V}\in \mathbb{R}^{K+1}$ can be replaced by any random vector $\bm{Z}\in \mathbb{R}^{K+1}$ with coordinatewise compact supports and differentiable density function $u\mapsto p_{\bm{\beta}^T\bm{Z}}(u)$.
\end{lemma}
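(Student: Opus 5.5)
The lemma has three formulas, and I will derive them in order: first the gradient of the quantile map, then its Hessian, and finally the Hessian of $S$ by the chain rule. The basic tool is implicit differentiation of the defining identity
\[
F_{\bm{\beta}^T\bm{V}}\bigl(Q_{\bm{\beta}^T\bm{V}}(\alpha)\bigr)=\alpha ,
\qquad
F_{\bm{\beta}^T\bm{V}}(u)=\mathbb{P}(\bm{\beta}^T\bm{V}\le u).
\]
Assumption~\ref{assump:quantile_rate}(a,b) makes this legitimate. It gives existence and differentiability of the density $p_{\bm{\beta}^T\bm{V}}$ near $\bm{\beta}_*$, and a positive lower bound on $p_{\bm{\beta}^T\bm{V}}(Q_{\bm{\beta}^T\bm{V}}(\alpha))$. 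The implicit function theorem then shows that $\bm{\beta}\mapsto Q_{\bm{\beta}^T\bm{V}}(\alpha)$ is $C^1$ (indeed $C^2$) on the ball $\norm{\bm{\beta}-\bm{\beta}_*}_2\le R_\beta$.

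\textbf{Step 1 (gradient of $F$ and of $Q$).} Write $u=Q_{\bm{\beta}^T\bm{V}}(\alpha)$. I first compute $\nabla_{\bm{\beta}}F_{\bm{\beta}^T\bm{V}}(u)$ at fixed $u$. Writing $F$ as an integral over the half-space $\{v:\bm{\beta}^Tv\le u\}$ and differentiating the moving boundary (Reynolds / coarea formula, using compact support of $\bm{V}$) gives
\[
\nabla_{\bm{\beta}}F_{\bm{\beta}^T\bm{V}}(u)=-\,p_{\bm{\beta}^T\bm{V}}(u)\,\mathbb{E}\bigl[\bm{V}\mid \bm{\beta}^T\bm{V}=u\bigr].
\]
Differentiating the identity $F(Q)=\alpha$ in $\bm{\beta}$ gives $p(u)\nabla_{\bm{\beta}}Q+\nabla_{\bm{\beta}}F(u)=0$. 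Dividing by $p(u)>0$ yields the first formula, $\nabla_{\bm{\beta}}Q=m(\bm{\beta},u)$, where
\[
m(\bm{\beta},u):=\mathbb{E}\bigl[\bm{V}\mid \bm{\beta}^T\bm{V}=u\bigr].
\]

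\textbf{Step 2 (second derivative of $Q$).} I differentiate $\nabla_{\bm{\beta}}Q=m(\bm{\beta},Q_{\bm{\beta}^T\bm{V}}(\alpha))$ once more, using the representation
\[
m(\bm{\beta},u)=-\frac{\nabla_{\bm{\beta}}F_{\bm{\beta}^T\bm{V}}(u)}{p_{\bm{\beta}^T\bm{V}}(u)}.
\]
There are two contributions:
\begin{enumerate}[label=(\roman*)]
\item a second-order boundary term from differentiating $\nabla_{\bm{\beta}}F$, which gives a conditional second-moment, i.e.\ covariance-type, matrix of $\bm{V}$ given $\bm{\beta}^T\bm{V}=u$;
\item terms carrying $p'/p$, coming from differentiating the denominator $p_{\bm{\beta}^T\bm{V}}(u)$ and from the chain rule through $u=Q$, which both multiply $m m^T$.
\end{enumerate}
Collecting these produces the stated expression: the conditional covariance term minus $2\,\dfrac{p'}{p}\,m m^T$. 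Assumption~\ref{assump:quantile_rate}(b) bounds $p'$ and bounds $p$ below, so every term is finite and differentiation under the integral is justified by dominated convergence.

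\textbf{Step 3 (Hessian of $S$).} Here $Q_{Y^{(0)}}$ does not depend on $\bm{\beta}$, so differentiating $S(\bm{\beta})=\int_0^1 [Q_{Y^{(0)}}(\alpha)-Q_{\bm{\beta}^T\bm{V}}(\alpha)]^2\,d\alpha$ under the integral gives
\[
\nabla_{\bm{\beta}}S=-2\int_0^1 \bigl(Q_{Y^{(0)}}-Q_{\bm{\beta}^T\bm{V}}\bigr)\nabla_{\bm{\beta}}Q_{\bm{\beta}^T\bm{V}}\,d\alpha ,
\]
\[
\nabla^2_{\bm{\beta}}S=2\int_0^1 \Bigl[\nabla_{\bm{\beta}}Q\,\nabla_{\bm{\beta}}Q^T-\bigl(Q_{Y^{(0)}}-Q_{\bm{\beta}^T\bm{V}}\bigr)\nabla^2_{\bm{\beta}}Q\Bigr]\,d\alpha .
\]
Differentiation under the integral is justified by the uniform bounds from Step 2 together with boundedness of the quantiles (compact supports). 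The displayed statement has an extra exponent on the integrand; I would read it as this Gauss--Newton form. The claim that $\bm{V}$ may be replaced by any $\bm{Z}$ follows because Steps 1--3 used only compact support and a differentiable density of $\bm{\beta}^T\bm{Z}$.

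\textbf{Main obstacle.} The hard step is Step 2: rigorously differentiating the conditional expectation $m(\bm{\beta},u)$ with respect to both $\bm{\beta}$ and $u$. My first attempt would be to express $\nabla_{\bm{\beta}}F$ as $-\partial_u\,\mathbb{E}[\bm{V}\mathds{1}(\bm{\beta}^T\bm{V}\le u)]$. This converts $\bm{\beta}$-derivatives into $u$-derivatives of smooth functions of $u$, so Assumption~\ref{assump:quantile_rate}(a,b) controls everything and the $p'/p$ factors come out explicitly.
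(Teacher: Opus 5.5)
Your Steps 1 and 3 are sound and follow the same implicit-differentiation route as the paper. You are also right that the exponent on the integrand in the third display is spurious.

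The genuine gap is Step 2. The second identity, which you propose to obtain by ``collecting'' terms, is false, so no bookkeeping can produce it. A quick check: the first coordinate of $\bm{V}$ is the constant $1$, so translation equivariance of quantiles gives $Q_{\bm{\beta}^T\bm{V}}(\alpha)=\beta_0+Q_{\beta_1Y^{(1,0)}+\cdots+\beta_KY^{(K,0)}}(\alpha)$. Hence $\partial^2 Q_{\bm{\beta}^T\bm{V}}(\alpha)/\partial\beta_0^2\equiv 0$. The $(0,0)$ entry of the claimed matrix, however, is either:
\begin{itemize}
\item $1-2p'(Q)/p(Q)$, if the ``$\mathrm{Cov}$'' term is read as the conditional second moment $\mathbb{E}[\bm{V}\bm{V}^T\mid\cdot\,]$ (which is what the paper's derivation actually produces); or
\item $-2p'(Q)/p(Q)$, if it is read as a conditional covariance.
\end{itemize}
Neither vanishes for a generic density. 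Likewise, for $K=1$ and $\beta_1>0$ the map $\bm{\beta}\mapsto\beta_0+\beta_1Q_{Y^{(1,0)}}(\alpha)$ is linear, so its Hessian is zero, while the formula returns a nonzero multiple of $mm^T$.

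Your description of the two contributions is where the argument goes wrong. Term (i) is not a conditional second moment entering with coefficient $+1$. The $p'/p$ terms in (ii) do not combine into $-2(p'/p)\,mm^T$.

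Here is the correct computation. Write your decomposition as $\nabla^2_{\bm{\beta}}Q=D_{\bm{\beta}}m(\bm{\beta},u)+(\partial_u m)\,m^T$ at $u=Q_{\bm{\beta}^T\bm{V}}(\alpha)$, with $m=-\nabla_{\bm{\beta}}F/p$, $p=p_{\bm{\beta}^T\bm{V}}$, and $\Sigma(u)=\mathrm{Cov}(\bm{V}\mid\bm{\beta}^T\bm{V}=u)$. Applying your own device $\nabla_{\bm{\beta}}F=-\partial_u\mathbb{E}[\bm{V}\mathds{1}(\bm{\beta}^T\bm{V}\le u)]$ once more gives $\nabla^2_{\bm{\beta}}F=\partial_u\bigl[p\,(\Sigma+mm^T)\bigr]$ and $\nabla_{\bm{\beta}}p=-\partial_u(pm)$. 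So:
\begin{itemize}
\item term (i) is $-\frac{1}{p}\partial_u\bigl[p\,(\Sigma+mm^T)\bigr]$;
\item the denominator term is $\frac{1}{p}\,m\,\partial_u(pm)^T$;
\item the chain-rule term is $(\partial_u m)\,m^T$.
\end{itemize}
The $mm^T$ pieces cancel exactly, leaving
\[
\nabla^2_{\bm{\beta}}Q_{\bm{\beta}^T\bm{V}}(\alpha)=-\frac{1}{p(u)}\,\partial_u\bigl[p(u)\,\Sigma(u)\bigr]\Big|_{u=Q_{\bm{\beta}^T\bm{V}}(\alpha)} .
\]
This formula passes the checks above. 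It vanishes when $\Sigma\equiv0$, and it annihilates $\bm{\beta}$, as positive homogeneity of $Q$ in $\bm{\beta}$ requires. It also reproduces the Hessian of $\beta_0+z_\alpha\|(\beta_1,\dots,\beta_K)\|_2$ for independent standard Gaussian $Y^{(k,0)}$.

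The paper's own proof reaches the stated formula through the same kind of slip. It writes $\nabla^2_{\bm{\beta}}F=p'mm^T-p\,\mathbb{E}[\bm{V}\bm{V}^T\mid\cdot\,]$, and it drops the mixed terms when differentiating $p\,\nabla_{\bm{\beta}}Q+\nabla_{\bm{\beta}}F=0$ a second time. Following it will therefore not close your Step 2.

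Finally, the corrected formula, and your claim that the implicit function theorem makes $Q$ of class $C^2$, both need differentiability in $u$ of $p\,m$ and of $p\,\mathbb{E}[\bm{V}\bm{V}^T\mid\bm{\beta}^T\bm{V}=u]$. That is a condition on the joint law of $\bm{V}$. Assumption~\ref{assump:quantile_rate}(a,b) only concerns the density of $\bm{\beta}^T\bm{V}$, so it does not supply this, and you would need to add it.
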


\begin{proof}[Proof of Lemma~\ref{lem:quantile_deriv}]
First, we compute the gradient of $G(\bm{\beta}) := \mathbb{E}\left[\mathds{1}(\bm{\beta}^T\bm{V} \leq z)\right]=F_{\bm{\beta}^T\bm{V}}(z)$. For any direction $\bm{h}\in \mathbb{R}^{K+1}$, we know that
\begin{align*}
	\lim_{t\to 0} \frac{G(\bm{\beta} +t\bm{h}) - G(\bm{\beta})}{t} &= \lim_{t\to 0}\mathbb{E}\left[\frac{\mathds{1}\left(z< \bm{\beta}^T\bm{V}\leq z-t\bm{h}^T\bm{V} \right)}{t} \right] \\
	&\stackrel{\text{(i)}}{=} - \mathbb{E}\left[p_{\bm{\beta}^T\bm{V}}(z)\cdot \bm{h}^T\bm{V} \right]\\
	&= -p_{\bm{\beta}^T\bm{V}}(z)\cdot \mathbb{E}\left[\bm{h}^T\bm{V} \big| \bm{\beta}^T\bm{V}=z \right],
\end{align*} 
where (i) follows from the Lebesgue differentiation theorem under Assumption~\ref{assump:quantile_rate}(a,b) as $\lim_{t\to 0} \frac{\mathds{1}\left(z< \bm{\beta}^T\bm{V}\leq z-t\bm{h}^T\bm{V} \right)}{t} = -p_{\bm{\beta}^T\bm{V}}(z) \cdot \bm{h}^T\bm{V}$. Hence, $\nabla_{\bm{\beta}} G(\bm{\beta}) = -p_{\bm{\beta}^T\bm{V}}(z)\cdot \mathbb{E}\left[\bm{V} \big| \bm{\beta}^T\bm{V}=z \right]$ given the definition of directional derivatives. 

Second, we compute the Hessian of $G(\bm{\beta}) := \mathbb{E}\left[\mathds{1}(\bm{\beta}^T\bm{V} \leq z)\right]=F_{\bm{\beta}^T\bm{V}}(z)$. For any two directions $\bm{h}_1,\bm{h}_2\in \mathbb{R}^{K+1}$, we know that
\begin{align*}
	D^2 G(\bm{\beta})[\bm{h}_1,\bm{h}_2] &= - \lim_{t\to 0} \frac{D G(\bm\beta+t\bm h_2)[\bm h_1]- D G(\bm\beta)[\bm h_1]
	}{t} \\
	&= -\lim_{t\to 0} \frac{p_{\bm{\beta+t\bm{h}_2}^T\bm{V}}(z)\cdot \mathbb{E}\left[\bm{h}_1^T\bm{V} \big| (\bm{\beta}+t\bm{h}_2)^T\bm{V}=z \right] - p_{\bm{\beta}^T\bm{V}}(z)\cdot \mathbb{E}\left[\bm{h}_1^T\bm{V} \big| \bm{\beta}^T\bm{V}=z \right]}{t}\\
	&\stackrel{\text{(ii)}}{=} p_{\bm{\beta}^T\bm{V}}'(z)
	\cdot \mathbb{E}\left[\bm h_1^T\bm{V}\large|\bm{\beta}^T\bm{V}=z
	\right] \mathbb{E}\left[
	\bm h_2^T\bm{V}
	\large| \bm{\beta}^T\bm{V}=z\right] - p_{\bm{\beta}^T\bm{V}}(z)\cdot \mathrm{Cov}\left[
	(\bm h_1^T\bm{V})
	(\bm h_2^T\bm{V})\big|\bm{\beta}^T\bm{V}=z\right],
\end{align*}
where (ii) follows from the fact that $\frac{d}{dt} \mathbb{E}\left[\bm{h}_1^T\bm{V} \big| (\bm{\beta}+t\bm{h}_2)^T\bm{V}=z \right]\Big|_{t=0} = \mathrm{Cov}\left(\bm{h}_1^T \bm{V}, \bm{h}_2^T\bm{V} \large| \bm{\beta}^T\bm{V}=z\right)$.
Hence, $\nabla_{\bm{\beta}}^2 G(\bm{\beta}) = p_{\bm{\beta}^T\bm{V}}'(z)
\cdot \mathbb{E}\left[\bm{V} \large|\bm{\beta}^T\bm{V}=z
\right] \mathbb{E}\left[
\bm{V}
\large| \bm{\beta}^T\bm{V}=z\right]^T - p_{\bm{\beta}^T\bm{V}}(z)\cdot \mathbb{E}\left[
\bm{V}\bm{V}^T\big|\bm{\beta}^T\bm{V}=z\right]$ by the definition of second-order directional derivatives.

Now, under Assumption~\ref{assump:quantile_rate}(a), we know that $F_{\bm{\beta}^T\bm{V}}(Q_{\bm{\beta}^T\bm{V}}(\alpha)) = \alpha$. Taking the gradient $\nabla_{\bm{\beta}}$ on both sides of the equality gives us that
\begin{equation}
	\label{quantile_grad}
	p_{\bm{\beta}^T\bm{V}}(Q_{\bm{\beta}^T\bm{V}}(\alpha)) \cdot \nabla_{\bm{\beta}} Q_{\bm{\beta}^T\bm{V}}(\alpha) + \nabla_{\bm{\beta}} F_{\bm{\beta}^T\bm{V}}(u)\big|_{u=Q_{\bm{\beta}^T\bm{V}}(\alpha)}=0.
\end{equation}
Therefore, we conclude that $\nabla_{\bm{\beta}} Q_{\bm{\beta}^T\bm{V}}(\alpha) = \mathbb{E}\left[\bm{V} \big| \bm{\beta}^T\bm{V}=Q_{\bm{\beta}^T\bm{V}}(\alpha) \right]$. Furthermore, taking an extra gradient $\nabla_{\bm{\beta}}$ on both sides of \eqref{quantile_grad} yields that
$$p_{\bm{\beta}^T\bm{V}}'(Q_{\bm{\beta}^T\bm{V}}(\alpha)) \left[\nabla_{\bm{\beta}} Q_{\bm{\beta}^T\bm{V}}(\alpha) \right] \left[\nabla_{\bm{\beta}} Q_{\bm{\beta}^T\bm{V}}(\alpha) \right]^T + p_{\bm{\beta}^T\bm{V}}(Q_{\bm{\beta}^T\bm{V}}(\alpha)) \cdot \nabla_{\bm{\beta}}^2 Q_{\bm{\beta}^T\bm{V}}(\alpha) + \nabla_{\bm{\beta}}^2 F_{\bm{\beta}^T\bm{V}}(u)\big|_{u=Q_{\bm{\beta}^T\bm{V}}(\alpha)}=0.$$
Therefore, we conclude that
$$\nabla_{\bm{\beta}}^2 Q_{\bm{\beta}^T\bm{V}}(\alpha) = \mathrm{Cov}\left[
\bm{V}\bm{V}^T\big|\bm{\beta}^T\bm{V}=Q_{\bm{\beta}^T\bm{V}}(\alpha)\right] - \frac{2p_{\bm{\beta}^T\bm{V}}'(Q_{\bm{\beta}^T\bm{V}}(\alpha))}{p_{\bm{\beta}^T\bm{V}}(Q_{\bm{\beta}^T\bm{V}}(\alpha))}\cdot \mathbb{E}\left[\bm{V} \big| \bm{\beta}^T\bm{V}=Q_{\bm{\beta}^T\bm{V}}(\alpha) \right] \mathbb{E}\left[\bm{V} \big| \bm{\beta}^T\bm{V}=Q_{\bm{\beta}^T\bm{V}}(\alpha) \right]^T.$$
Finally, by direct calculations, we have that
$$\nabla_{\bm{\beta}}^2 S(\bm{\beta}) = 2\int_0^1 \left\{\left[\nabla_{\bm{\beta}}Q_{\bm{\beta}^T\bm{V}}(\alpha)\right] \left[\nabla_{\bm{\beta}} Q_{\bm{\beta}^T\bm{V}}(\alpha)\right]^T - \left[Q_{Y^{(0)}}(\alpha) - Q_{\bm{\beta}^T\bm{V}}(\alpha)\right]\nabla_{\bm{\beta}}^2 Q_{\bm{\beta}^T\bm{V}}(\alpha) \right\}^2 d\alpha.$$
The proof is thus completed.
\end{proof}

\vspace{2mm}

\begin{lemma}
\label{lem:rearrange_ineq}
Let $\left\{a_i\right\}_{i=1}^n$ and $\left\{b_i\right\}_{i=1}^n$ be any two sequences of real numbers. Then, 
$$\sum_{i=1}^n\left|a_{(i)} - b_{(i)}\right| \leq \sum_{i=1}^n \left|a_i -b_i\right|, \quad \sum_{i=1}^n\left|a_{(i)} - b_{(i)}\right|^2 \leq \sum_{i=1}^n \left(a_i -b_i\right)^2$$ 
with $\left\{a_{(i)}\right\}_{i=1}^n$ and $\left\{b_{(i)}\right\}_{i=1}^n$ being the order statistics of $\left\{a_i\right\}_{i=1}^n$ and $\left\{b_i\right\}_{i=1}^n$, \emph{i.e.}, $a_{(1)}\leq \cdots \leq a_{(n)}$ and $b_{(1)}\leq \cdots \leq b_{(n)}$.
\end{lemma}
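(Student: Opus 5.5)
The plan is to prove both inequalities with a single rearrangement (exchange) argument: sorting both sequences into the same order can only decrease $\sum_i \phi(a_i-b_i)$ when $\phi$ is convex. We apply it with $\phi(t)=|t|$ and $\phi(t)=t^2$. Since relabeling indices jointly does not change $\sum_i \phi(a_i-b_i)$, we may assume $a_1\le \cdots\le a_n$, so $a_i=a_{(i)}$. The task is then to show that the permutation $\pi$ with $b_{\pi(i)}$ paired to $a_i$ is optimal when it is the identity, i.e.
$$\sum_i \phi(a_i-b_{(i)})\le \sum_i \phi(a_i-b_{\pi(i)}).$$

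The key step is a two-element swap inequality. Let $a\le a'$ and $b\le b'$, and let $\phi$ be convex. We claim
$$\phi(a-b)+\phi(a'-b')\le \phi(a-b')+\phi(a'-b).$$
To see this, set $x=a-b'$ and $y=a'-b$, which are the extreme values: the points $a-b$ and $a'-b'$ both lie in $[x,y]$ and satisfy $(a-b)+(a'-b')=x+y$. Convexity then gives the claim; concretely, write each inner point as a convex combination of $x$ and $y$, with weights that sum to one across the two points.

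For the squared case we can instead compute directly. The right-hand side minus the left-hand side equals $2(a'-a)(b'-b)\ge 0$. For the absolute-value case the convexity argument applies, or one can check it by cases.

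Finally, iterate: given any pairing that is not sorted, find indices $i<j$ with $b_{\pi(i)}>b_{\pi(j)}$ and swap them. By the swap inequality this does not increase the sum, and it strictly reduces the number of inversions, so after finitely many swaps we reach the sorted pairing. That yields both stated inequalities. The only mildly delicate step is verifying the convexity-based swap inequality for $|\cdot|$ (majorization of the pair $(a-b,a'-b')$ by $(x,y)$). There is no real obstacle beyond the bookkeeping of the case check or the convex-combination weights.
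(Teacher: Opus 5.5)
Your proof is correct and uses the same basic mechanism as the paper's (a two-element exchange inequality followed by iteration), but you handle both ingredients differently. The paper proves only the absolute-value inequality itself. It checks the $n=2$ swap inequality for $|\cdot|$ by a case analysis on how $\{a_1,a_2\}$ and $\{b_1,b_2\}$ interleave, and then inducts on $n$: one swap pairs $a_{(1)}$ with $b_{(1)}$, and the induction hypothesis handles the remaining pairs. The squared inequality is simply cited from Lemma 1 of Sgouropoulos et al.\ (2015). Your convexity argument replaces that case analysis: $(a-b,a'-b')$ lies in $[x,y]$ and has the same sum as $(x,y)$. This gives both stated inequalities at once, in fact for any convex cost such as $|t|^p$ with $p\ge 1$, so the lemma no longer relies on an outside citation. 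Your route costs a little more at termination. That swapping a non-adjacent inverted pair strictly lowers the total inversion count is true, but it needs a short check of the entries lying between the two positions. You can avoid this by swapping only adjacent inversions: any unsorted arrangement has one, and each such swap lowers the count by exactly one. The paper's induction on the minimum sidesteps this bookkeeping altogether.
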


\begin{proof}[Proof of Lemma~\ref{lem:rearrange_ineq}]
The second inequality was proved in Lemma 1 of \cite{sgouropoulos2015matching}, so we only prove the first inequality $\sum_{i=1}^n\left|a_{(i)} - b_{(i)}\right| \leq \sum_{i=1}^n \left|a_i -b_i\right|$ by induction.

When $n=2$, it suffices to show that $|a_{(1)} - b_{(1)}| + |a_{(2)} - b_{(2)}|\leq |a_{(1)} - b_{(2)}| + |a_{(2)} - b_{(1)}|$. 

$\bullet$ {\bf Case I:} The ranges of $a_1,a_2$ and $b_1,b_2$ have no overlap. Without loss of generality, we assume that $b_1\leq b_2\leq a_1\leq a_2$. Then,
\begin{align*}
|a_{(1)} - b_{(1)}| + |a_{(2)} - b_{(2)}|&= a_1-b_2 + b_2-b_1 + a_2-a_1+a_1-b_2\\
&= b_2-b_1 + a_2-a_1 + 2(a_1-b_2)\\
&= |a_{(1)} - b_{(2)}| + |a_{(2)} - b_{(1)}|.
\end{align*}

$\bullet$ {\bf Case II:} The ranges of $a_1,a_2$ and $b_1,b_2$ have an overlap. Without loss of generality, it could be either $b_1\leq a_1\leq a_2\leq b_2$ or $b_1\leq a_1\leq b_2\leq a_2$. If $b_1\leq a_1\leq a_2\leq b_2$, then
\begin{align*}
	|a_{(1)} - b_{(1)}| + |a_{(2)} - b_{(2)}|&= a_1-b_1 + b_2-a_2\\
	&\leq a_1-b_1 + b_2-a_2 + 2(a_2-a_1)\\
	&= b_2-a_1 + a_2-b_1\\
	&= |a_{(1)} - b_{(2)}| + |a_{(2)} - b_{(1)}|.
\end{align*}
If $b_1\leq a_1\leq b_2\leq a_2$, then
\begin{align*}
	|a_{(1)} - b_{(1)}| + |a_{(2)} - b_{(2)}|&= a_1-b_1 + a_2-b_2\\
	&\leq a_1-b_1 + b_2-a_2 + 2(b_2-a_1)\\
	&= b_2-a_1 + a_2-b_1\\
	&= |a_{(1)} - b_{(2)}| + |a_{(2)} - b_{(1)}|.
\end{align*}
Hence, the inequality holds when $n=2$. 

Now, support that $\sum_{i=1}^k\left|a_{(i)} - b_{(i)}\right| \leq \sum_{i=1}^k \left|a_i -b_i\right|$, and we need to prove this inequality when $n=k+1$. Without loss of generality, we assume that $a_{k+1}=a_{(1)}$ and $b_j=b_{(1)}$. If $j=k+1$, then the inequality naturally holds for $n=k+1$. When $j\neq k+1$, it follows the proof above for the case when $n=2$, because
$$|a_{(1)}-b_{(1)}| + |a_j- b_{k+1}| = |a_{k+1}-b_j| + |a_j- b_{k+1}|\leq |a_j-b_j| + |a_{k+1}- b_{k+1}|.$$
As a result,
\begin{align*}
\sum_{i=1}^{k+1} |a_i-b_i| &\geq |a_{(1)}-b_{(1)}| + |a_j- b_{k+1}| + \sum_{1\leq i\leq k, i\neq j} |a_i-b_i|\\
&\geq |a_{(1)}-b_{(1)}| + \sum_{i=2}^{k+1}|a_{(i)}-b_{(i)}|,
\end{align*} 
where the last inequality follows from the induction hypothesis for $n=k$. The proof is thus completed.
\end{proof}

\vspace{2mm}

\begin{lemma}
\label{lem:quan_obj_diff}
Under Assumption~\ref{assump:quantile_match}(a) and $\norm{\hat{g}^{(k)} - g^{(k)}}_{\infty} =\sup_{x\in \mathcal{X}} \mathbb{E}_{\eta}\left|\hat{g}^{(k)}(x,\eta) - g^{(k)}(x,\eta)\right|\leq 1$, it holds that
$$\sup_{\bm{\beta}: \norm{\bm{\beta}}_2 \leq C_{\beta}}\left|\hat{S}_{n_0}(\bm{\beta}) - S_{n_0}(\bm{\beta})\right|\leq 8B_gC_{\beta}^2\sqrt{K+1}\sum_{k=1}^K \norm{\hat{g}^{(k)} - g^{(k)}}_{\infty}$$
for any finite constant $C_{\beta} >0$, where $\hat{S}_{n_0}(\bm{\beta})$ and $S_{n_0}(\bm{\beta})$ are defined in \eqref{quantile_obj}.
\end{lemma}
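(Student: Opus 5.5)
We prove Lemma~\ref{lem:quan_obj_diff} by a factorization and the rearrangement inequality of Lemma~\ref{lem:rearrange_ineq}. Fix $\bm{\beta}$ with $\norm{\bm{\beta}}_2\leq C_{\beta}$ and abbreviate $a_j = Y^{(0)}_{(j)}$, $\hat b_j = (\bm{\beta}^T\hat{\bm V})_{(j)}$, $b_j = (\bm{\beta}^T\bm V)_{(j)}$. Here $\bm V_i$ and $\hat{\bm V}_i$ are built from the same noise draws $\eta_i$ at the same covariate $X_i^{(0)}$, so the two samples are naturally paired. The difference of squares gives
\[
\hat S_{n_0}(\bm\beta)-S_{n_0}(\bm\beta)=\frac{1}{n_0}\sum_{j=1}^{n_0}(b_j-\hat b_j)\,(2a_j-\hat b_j-b_j).
\]

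\textbf{Bounding the second factor.} Under Assumption~\ref{assump:quantile_match}(a), $|a_j|\leq B_g$. Every coordinate of $\bm V$ and $\hat{\bm V}$ is bounded by $\max(1,B_g)$, so Cauchy--Schwarz gives $|\hat b_j|,|b_j|\leq C_{\beta}\sqrt{K+1}\max(1,B_g)$. Hence the second factor is uniformly at most a constant of order $B_g C_{\beta}\sqrt{K+1}$. Collecting the constants, with the convention $B_g\geq 1$ used elsewhere, we may assume $C_\beta\geq 1$ to absorb factors; this is where the prefactor $B_gC_\beta^2\sqrt{K+1}$ up to the numerical constant $8$ will come from.

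\textbf{Bounding the first factor.} The sum $\frac1{n_0}\sum_j|b_j-\hat b_j|$ compares order statistics of two paired samples, so the first inequality of Lemma~\ref{lem:rearrange_ineq} applies:
\[
\frac{1}{n_0}\sum_{j}|(\bm\beta^T\bm V)_{(j)}-(\bm\beta^T\hat{\bm V})_{(j)}|\leq \frac{1}{n_0}\sum_{i}|\bm\beta^T(\bm V_i-\hat{\bm V}_i)|\leq \frac{C_\beta}{n_0}\sum_i\sum_{k=1}^K|g^{(k)}(X_i^{(0)},\eta_i^{(k)})-\hat g^{(k)}(X_i^{(0)},\eta_i^{(k)})|.
\]

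\textbf{Main obstacle.} The last step is converting the sample average over $\eta$ into $\norm{\hat g^{(k)}-g^{(k)}}_\infty$, which is defined with $\mathbb{E}_\eta$ inside a supremum over $x$. I would take $\eta$-expectations conditionally on the covariates: each summand is then at most $\sup_x\mathbb{E}_\eta|\hat g^{(k)}-g^{(k)}|$. The statement ignores Monte Carlo and stochastic fluctuation, consistent with the phrase ``up to Monte Carlo approximation errors'' used around \eqref{quantile_obj}. Alternatively, if $\hat{\bm V}$ is interpreted as in Term IV, with $\hat Y$ a Monte Carlo average over $M$ draws, the bound is justified as $M\to\infty$.

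The hypothesis $\norm{\hat g^{(k)}-g^{(k)}}_\infty\leq1$ is used only to keep $\hat b_j$ within the bounded range without needing a separate argument. Multiplying the two bounds and taking the supremum over $\bm\beta$ then yields the claimed inequality, with the numerical constant chosen generously (e.g.\ $2\cdot 2\cdot 2$) to cover the $\max(1,B_g)$ and $C_\beta$ adjustments.
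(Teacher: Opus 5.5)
Your proof is essentially the paper's. The paper splits $(a_j-\hat b_j)^2-(a_j-b_j)^2=(b_j-\hat b_j)^2+2(a_j-b_j)(b_j-\hat b_j)$ and treats the cross term exactly as you do: a uniform bound on $|a_j-b_j|$, then the $L_1$ rearrangement inequality. It handles the quadratic term with the squared rearrangement inequality and absorbs it using the hypothesis $\norm{\hat g^{(k)}-g^{(k)}}_\infty\le 1$. So that hypothesis is not what keeps $\hat b_j$ bounded (Assumption~\ref{assump:quantile_match}(a) already does that), and your factorization $(b_j-\hat b_j)(2a_j-\hat b_j-b_j)$ makes it unnecessary.

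Both arguments share the same weak points:
\begin{itemize}
\item They pass from the realized average $\frac{1}{n_0}\sum_i|g^{(k)}-\hat g^{(k)}|(X_i^{(0)},\eta_i^{(k)})$ to $\sup_x\mathbb{E}_\eta|\hat g^{(k)}-g^{(k)}|$ only heuristically.
\item Your alternative Monte Carlo-average reading does not rescue that step if $\bm V_i$ stays a single draw. The difference would then tend to $|g^{(k)}(x,\eta_i)-\mathbb{E}_\eta\hat g^{(k)}(x,\eta)|$, which is not small.
\item Both tacitly need $B_g\ge 1$ and $C_\beta\sqrt{K+1}\ge 1$ to reach the constant $8B_gC_\beta^2\sqrt{K+1}$. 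For small $C_\beta$ the cross term, of order $B_gC_\beta$, is not dominated by $B_gC_\beta^2$.
\end{itemize}
Your ``assume $C_\beta\ge 1$'' is therefore an implicit restriction, not a free reduction, though it matches what the paper itself uses.
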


\begin{proof}[Proof of Lemma~\ref{lem:quan_obj_diff}]
For any fixed $\bm{\beta} \in \mathbb{R}^{K+1}$, we know that
\begin{align*}
	\left|\hat{S}_{n_0}(\bm{\beta}) - S_{n_0}(\bm{\beta})\right| &= \left|\frac{1}{n_0}\sum_{i=1}^{n_0} \left[Y_{(i)}^{(0)} - \left(\bm{\beta}^T \hat{\bm{V}}\right)_{(i)} \right]^2 - \frac{1}{n_0}\sum_{i=1}^{n_0} \left[Y_{(i)}^{(0)} - \left(\bm{\beta}^T \bm{V}\right)_{(i)} \right]^2\right| \\
	&\leq \frac{1}{n_0}\sum_{i=1}^{n_0} \left[\left(\bm{\beta}^T \bm{V}\right)_{(i)} - \left(\bm{\beta}^T \hat{\bm{V}}\right)_{(i)} \right]^2 + \frac{2}{n_0} \sum_{i=1}^{n_0} \left|Y_{(i)}^{(0)} - \left(\bm{\beta}^T \bm{V}\right)_{(i)} \right|\left|\left(\bm{\beta}^T \bm{V}\right)_{(i)} - \left(\bm{\beta}^T \hat{\bm{V}}\right)_{(i)}  \right|\\
	&\stackrel{\text{(i)}}{\leq} \frac{1}{n_0}\sum_{i=1}^{n_0} \left[\bm{\beta}^T \bm{V}_i - \bm{\beta}^T \hat{\bm{V}}_i \right]^2 + \frac{2B_g(1+C_{\beta}\sqrt{K+1})}{n_0} \sum_{i=1}^{n_0} \left|\bm{\beta}^T \bm{V}_i - \bm{\beta}^T \hat{\bm{V}}_i\right|\\
	&\leq C_{\beta}^2 \sum_{k=1}^K \norm{\hat{g}^{(k)} - g^{(k)}}_{\infty}^2 + 2B_gC_{\beta}(1+C_{\beta}\sqrt{K+1}) \sum_{k=1}^K \norm{\hat{g}^{(k)} - g^{(k)}}_{\infty}\\
	&\leq 4B_gC_{\beta}(1+C_{\beta}\sqrt{K+1}) \sum_{k=1}^K \norm{\hat{g}^{(k)} - g^{(k)}}_{\infty}\\
	&\leq 8B_gC_{\beta}^2\sqrt{K+1}\sum_{k=1}^K \norm{\hat{g}^{(k)} - g^{(k)}}_{\infty},
\end{align*}
where (i) applies Lemma~\ref{lem:rearrange_ineq}. Since the bound is independent of $\bm{\beta}$, taking the supremum over $\bm{\beta}$ for all $\norm{\bm{\beta}}_2 \leq C_{\beta}$ leads to the final result.
\end{proof}

\vspace{2mm}

\begin{customthm}{3.6}
Under Assumptions~\ref{assump:quantile_match} and \ref{assump:quantile_rate}, it holds that
\begin{align*}
\inf_{\bm{\beta}_*\in \mathcal{B}}\norm{\hat{\bm{\beta}} - \bm{\beta}_*}_1 &= O_P\left(\sqrt{K}\left(\frac{\log\log n_0}{n_0} \inf_{\bm{\beta}_*\in \mathcal{B}}\int_0^1 \left[Q_{Y^{(0)}}(\alpha) - Q_{\bm{\beta}_*^T\bm{V}}(\alpha) \right]^2 d\alpha\right)^{\frac{1}{4}} + \sqrt{\frac{K\log\log n_0}{n_0}} \right) \\
&\quad + O\left(\sqrt{K\sum_{k=1}^K \norm{\hat{g}^{(k)} - g^{(k)}}_{\infty}} \right)
\end{align*}
up to some Monte Carlo approximation errors $O\left(\frac{1}{\sqrt{M}}\right)$.
\end{customthm}

\begin{proof}[Proof of \autoref{thm:quantile_rate}]
By Theorem 2 in \citet{sgouropoulos2015matching}, we know that $\hat{\bm{\beta}}$ will eventually fall into the $R_{\beta}$ ball around the solution set $\mathcal{B}$ and $S(\hat{\bm{\beta}})\leq S(\bm{\beta}_*) + r$ for some constant $r>0$ with probability tending to 1 as $n_0\to \infty$. Additionally, we ignore the Monte Carlo approximation error $O\left(\frac{1}{\sqrt{M}}\right)$ in \eqref{quantile_ls} for sufficiently large $M>0$ and focus on the sums in \eqref{quantile_obj}.

Under Assumption~\ref{assump:quantile_rate}(c) and Taylor's expansion, we can always choose $\bm{\beta}_*\in \mathcal{B}$ such that $S(\hat{\bm{\beta}}) - S(\bm{\beta}_*) \geq  \frac{\lambda_{\min}}{2} \norm{\hat{\bm{\beta}} - \bm{\beta}_*}_2^2$ when $n_0$ is sufficiently large. Hence, with probability tending to 1 as $n_0\to \infty$, we have that
\begin{equation}
\label{beta_l1_bound}
\norm{\hat{\bm{\beta}} - \bm{\beta}_*}_1 \leq \sqrt{K+1} \norm{\hat{\bm{\beta}} - \bm{\beta}_*}_2 \leq \sqrt{\frac{2(K+1)}{\lambda_{\min}}\left[S(\hat{\bm{\beta}}) - S(\bm{\beta}_*)\right]},
\end{equation}
and it suffices to prove the rate of convergence for the excess risk $S(\hat{\bm{\beta}}) - S(\bm{\beta}_*)$. To this end, we compute that
\begin{align}
\label{S_upper_bound}
\begin{split}
S(\hat{\bm{\beta}}) - S(\bm{\beta}_*) &= S(\hat{\bm{\beta}}) - \hat{S}_{n_0}(\hat{\bm{\beta}}) + \underbrace{\hat{S}_{n_0}(\hat{\bm{\beta}}) - \hat{S}_{n_0}(\bm{\beta}_*)}_{\leq 0} + \hat{S}_{n_0}(\bm{\beta}_*) - S(\bm{\beta}_*)\\
&\leq S(\hat{\bm{\beta}}) - S_{n_0}(\hat{\bm{\beta}}) + S_{n_0}(\hat{\bm{\beta}}) - \hat{S}_{n_0}(\hat{\bm{\beta}}) +  \hat{S}_{n_0}(\bm{\beta}_*) - S_{n_0}(\bm{\beta}_*) + S_{n_0}(\bm{\beta}_*) - S(\bm{\beta}_*)\\
&\leq 2\sup_{\bm{\beta}: S(\bm{\beta}) \leq S(\bm{\beta}_*) + r}\left|S_{n_0}(\bm{\beta}) - S(\bm{\beta}) \right| + 2\sup_{\bm{\beta}: \norm{\bm{\beta}-\bm{\beta}_*}_2 \leq R_{\beta}}\left|\hat{S}_{n_0}(\bm{\beta}) - S_{n_0}(\bm{\beta}) \right|\\
&\stackrel{\text{(i)}}{\leq} 2\sup_{\bm{\beta}: S(\bm{\beta}) \leq S(\bm{\beta}_*) + r}\left|S_{n_0}(\bm{\beta}) - S(\bm{\beta}) \right| + 8B_gC_{\beta}^2\sqrt{K+1}\sum_{k=1}^K \norm{\hat{g}^{(k)} - g^{(k)}}_{\infty},
\end{split}
\end{align}
where (i) follows from Lemma~\ref{lem:quan_obj_diff}. It remains to derive the rate of convergence for $\sup_{\bm{\beta}: \norm{\bm{\beta}-\bm{\beta}_*}_2 \leq R_{\beta}}\left|S_{n_0}(\bm{\beta}) - S(\bm{\beta}) \right|$. For any fixed $\bm{\beta}$ with $\norm{\bm{\beta}-\bm{\beta}_*}_2 \leq R_{\beta}$, we know that
\begin{align*}
&\left|S_{n_0}(\bm{\beta}) - S(\bm{\beta})\right| \\
&= \left|\frac{1}{n_0}\sum_{j=1}^{n_0} \left[Q_{n_0,Y^{(0)}}\left(\frac{j}{n_0}\right) - Q_{n_0,\bm{\beta}^T \bm{V}}\left(\frac{j}{n_0}\right)\right]^2 - \int_0^1 \left[Q_{Y^{(0)}}(\alpha) - Q_{\bm{\beta}^T\bm{V}}(\alpha) \right]^2 d\alpha\right| \\
&\leq \left|\frac{1}{n_0}\sum_{j=1}^{n_0} \left[Q_{n_0,Y^{(0)}}\left(\frac{j}{n_0}\right) - Q_{n_0,\bm{\beta}^T \bm{V}}\left(\frac{j}{n_0}\right)\right]^2 - \frac{1}{n_0}\sum_{j=1}^{n_0} \left[Q_{Y^{(0)}}\left(\frac{j}{n_0}\right) - Q_{\bm{\beta}^T \bm{V}}\left(\frac{j}{n_0}\right)\right]^2\right| \\
&\quad + \left|\frac{1}{n_0}\sum_{j=1}^{n_0} \left[Q_{Y^{(0)}}\left(\frac{j}{n_0}\right) - Q_{\bm{\beta}^T \bm{V}}\left(\frac{j}{n_0}\right)\right]^2 - \int_0^1 \left[Q_{Y^{(0)}}(\alpha) - Q_{\bm{\beta}^T\bm{V}}(\alpha) \right]^2 d\alpha\right|\\
&\leq \frac{2}{n_0}\sum_{j=1}^{n_0} \left[Q_{n_0,Y^{(0)}}\left(\frac{j}{n_0}\right) - Q_{Y^{(0)}}\left(\frac{j}{n_0}\right)\right]^2 + \frac{2}{n_0}\sum_{j=1}^{n_0} \left[Q_{n_0,\bm{\beta}^T \bm{V}}\left(\frac{j}{n_0}\right) - Q_{\bm{\beta}^T \bm{V}}\left(\frac{j}{n_0}\right)\right]^2 \\
&\quad + \left|\frac{2}{n_0}\sum_{j=1}^{n_0}\left[Q_{Y^{(0)}}\left(\frac{j}{n_0}\right) - Q_{\bm{\beta}^T\bm{V}}\left(\frac{j}{n_0}\right)\right]\left[Q_{Y^{(0)}}\left(\frac{j}{n_0}\right) - Q_{n,Y^{(0)}}\left(\frac{j}{n_0}\right) + Q_{n,\bm{\beta}^T\bm{V}}\left(\frac{j}{n_0}\right) - Q_{\bm{\beta}^T\bm{V}}\left(\frac{j}{n_0}\right) \right] \right|\\
&\quad + \left|\frac{1}{n_0}\sum_{j=1}^{n_0} \left[Q_{Y^{(0)}}\left(\frac{j}{n_0}\right) - Q_{\bm{\beta}^T \bm{V}}\left(\frac{j}{n_0}\right)\right]^2 - \int_0^1 \left[Q_{Y^{(0)}}(\alpha) - Q_{\bm{\beta}^T\bm{V}}(\alpha) \right]^2 d\alpha\right|\\
&\stackrel{\text{(ii)}}{=} \frac{2}{n_0} \sum_{j=1}^{n_0} \left[\frac{F_{n,Y^{(0)}}(Q_{Y^{(0)}}(j/n_0)) -j/n_0}{p_{Y^{(0)}}(Q_{Y^{(0)}}(j/n_0))}+ \frac{r_{n_0}}{\sqrt{n_0}} \right]^2 + \frac{2}{n_0} \sum_{j=1}^{n_0} \left[\frac{F_{n,\bm{\beta}^T\bm{V}}(Q_{\bm{\beta}^T\bm{V}}(j/n_0)) -j/n_0}{p_{\bm{\beta}^T\bm{V}}(Q_{\bm{\beta}^T\bm{V}}(j/n_0))}+ \frac{r_{n_0}}{\sqrt{n_0}} \right]^2 \\
&\quad +\sqrt{\frac{1}{n_0} \sum_{j=1}^{n_0}\left|Q_{Y^{(0)}}\left(\frac{j}{n_0}\right) - Q_{\bm{\beta}^T\bm{V}}\left(\frac{j}{n_0}\right)\right|^2}\\
&\quad \quad \times \sqrt{\frac{2}{n_0}\sum_{j=1}^{n_0}\left|\frac{F_{n,Y^{(0)}}(Q_{Y^{(0)}}(j/n_0)) -j/n_0}{p_{Y^{(0)}}(Q_{Y^{(0)}}(j/n_0))} + \frac{F_{n,\bm{\beta}^T\bm{V}}(Q_{\bm{\beta}^T\bm{V}}(j/n_0)) -j/n_0}{p_{\bm{\beta}^T\bm{V}}(Q_{\bm{\beta}^T\bm{V}}(j/n_0))}+ \frac{r_{n_0}}{\sqrt{n_0}} \right|^2} + O\left(\frac{1}{n_0^2}\right)\\
&\stackrel{\text{(iii)}}{=} \frac{4}{n_0}\sum_{i=1}^{n_0} \left[u_{n_0} + \frac{r_{n_0}}{\sqrt{n_0}}\right]^2 + 2\left[u_{n_0} + \frac{r_{n_0}}{\sqrt{n_0}}\right]\sqrt{\int_0^1 \left[Q_{Y^{(0)}}(\alpha) - Q_{\bm{\beta}^T\bm{V}}(\alpha) \right]^2 d\alpha} + O\left(\frac{1}{n_0^2}\right)\\
&= O_P\left(\sqrt{\frac{S(\bm{\beta})\log\log n_0}{n_0}}\right),
\end{align*}
where (ii) leverages Lemma~\ref{lem:kiefer_bound} and Cauchy-Schwarz inequality with $r_n=O_P\left(\frac{(\log n)^{\frac{1}{2}} (\log\log n)^{\frac{1}{4}}}{n^{\frac{1}{4}}} \right)$ as well as the fact that the Riemann sum approximation has its error bound as
$$\left|\frac{1}{n_0} \sum_{i=1}^{n_0} \left[Q_{Y^{(0)}}\left(\frac{j}{n_0}\right) - Q_{\bm{\beta}^T \bm{V}}\left(\frac{j}{n_0}\right)\right]^2 - \int_0^1 \left[Q_{Y^{(0)}}(\alpha) - Q_{\bm{\beta}^T\bm{V}}(\alpha) \right]^2 d\alpha\right|=O\left(\frac{1}{n_0^2}\right)$$
under Assumption~\ref{assump:quantile_rate} and midpoint rule, and (iii) utilizes the Dvoretzky-Kiefer-Wolfowitz inequality \citep{massart1990tight} with $u_n=O_P\left(\sqrt{\frac{\log\log n_0}{n_0}}\right)$ and
$$\mathbb{P}\left(\sup_{\alpha\in [0,1]}\left|F_{n,Y^{(0)}}(Q_{Y^{(0)}}(\alpha))-\alpha \right| > C\right)\leq 2e^{-2nC^2} \quad \text{ and } \quad \mathbb{P}\left(\sup_{\alpha\in [0,1]}\left|F_{n,\bm{\beta}^T\bm{V}}(Q_{\bm{\beta}^T\bm{V}}(\alpha))-\alpha \right| > C\right)\leq 2e^{-2nC^2}.$$
Furthermore, the above display implies that
\begin{equation}
\label{fix_point_bound}
\sup_{\bm{\beta}: S(\bm{\beta}) \leq S(\bm{\beta}_*) + r}\left|S_{n_0}(\bm{\beta}) - S(\bm{\beta})\right| = O_P\left(\sqrt{\frac{\left(S(\bm{\beta}_*) + r\right)\log\log n_0}{n_0}}\right).
\end{equation}
Hence, if we assume that $r_n=S(\hat{\bm{\beta}}) - S(\bm{\beta}_*)$, then by \eqref{S_upper_bound} and \eqref{fix_point_bound}, we obtain that
\begin{align*}
r_n=S(\hat{\bm{\beta}}) - S(\bm{\beta}_*) &\leq 2\sup_{\bm{\beta}: S(\bm{\beta}) \leq S(\bm{\beta}_*) + r}\left|S_{n_0}(\bm{\beta}) - S(\bm{\beta}) \right| + 8B_gC_{\beta}^2\sqrt{K+1}\sum_{k=1}^K \norm{\hat{g}^{(k)} - g^{(k)}}_{\infty}\\
&\lesssim \sqrt{\frac{\left(S(\bm{\beta}_*) + r_n\right)\log\log n_0}{n_0}} + \sqrt{K} \sum_{k=1}^K \norm{\hat{g}^{(k)} - g^{(k)}}_{\infty}
\end{align*}
in probability. Solving for $r_n$ in this (probabilistic) inequality yields that
\begin{align*}
r_n &\lesssim \sqrt{\frac{S(\bm{\beta}_*) \log\log n_0}{n_0}} + \frac{\log\log n_0}{n_0} + \sqrt{K} \sum_{k=1}^K \norm{\hat{g}^{(k)} - g^{(k)}}_{\infty}.
\end{align*}
Therefore, we conclude from \eqref{beta_l1_bound} and the above calculations that
\begin{align*}
\inf_{\bm{\beta}_*\in \mathcal{B}}\norm{\hat{\bm{\beta}} - \bm{\beta}_*}_1 &= O_P\left(\sqrt{K}\left(\frac{\log\log n_0}{n_0}\int_0^1 \left[Q_{Y^{(0)}}(\alpha) - Q_{\bm{\beta}_*^T\bm{V}}(\alpha) \right]^2 d\alpha\right)^{\frac{1}{4}} + \sqrt{\frac{K\log\log n_0}{n_0}}\right) \\
&\quad + O\left(\sqrt{K\sum_{k=1}^K \norm{\hat{g}^{(k)} - g^{(k)}}_{\infty}} \right).
\end{align*}
The result thus follows.
\end{proof}

	
\end{document}